\DeclareMathOperator*{\argmax}{arg\,max}
\DeclareMathOperator*{\argmin}{arg\,min}
\useunder{\uline}{\ul}{}
\newtheorem{theorem}{Theorem}[section]
\newtheorem{lemma}[theorem]{Lemma}
\newtheorem{remark}[theorem]{Remark}
\title{Decomposable Sparse Tensor on Tensor Regression}
\author{%
 Haiyi Mao \\
  Department of System and Computational Biology\\
  School of Medicine\\
  University of Pittsburgh\\
  \texttt{ham112@pitt.edu} \\
  \And
  Jason Xiaotian Dou \\
  Department of Electrical and Computer Engineering, \\
  University of Pittsburgh\\
  \texttt{jason.dou@pitt.edu} 
}
\begin{document}

\maketitle

\begin{abstract}
  Most regularized tensor regression research focuses on tensors predictors with scalars responses or vectors predictors to tensors responses. We consider the sparse low rank tensor on tensor regression where predictors $\mathcal{X}$ and responses $\mathcal{Y}$ are both high-dimensional tensors. By demonstrating that the general inner product  \citep{raskutti2019convex} or the contracted product \citep{lock2018tensor} on a unit rank tensor can be decomposed into standard inner products and outer products, the problem can be simply transformed into a tensor to scalar regression followed by a tensor decomposition. So we propose a fast solution based on stagewise search composed by contraction part and generation part which are optimized alternatively. We successfully demonstrate that our method can outperform current methods in terms of accuracy, predictors selection by effectively incorporating the structural information.
\end{abstract}

\section{Introduction}
Regularized regression plays a fundamental role in statistics, machine learning, data mining to identify the relationship between the predictors $X$ and responses $Y$. \citep{hastie} demonstrated the sparse interpretation of the coefficient between $X$ and $Y$ is one of the most important criteria for regression models.  LASSO \citep{tibshirani1996regression} by minimizing the sum of squares of residuals subject to $\ell_1$ regularizer, has the sparse estimation of the coefficient thus implicitly feature selection.  \par
Tensor data are also called as multi-dimensional or multi-way arrays to represent higher dimensional complex structural data which increasingly become common nowadays \citep{carroll1970analysis, kolda2009tensor, zhou2013tensor}. For instance, it is common that people collect high dimensional omic data over multiple perspective like times points, tissues, fluids \citep{ramasamy2014genetic}. In medical imaging, Magnetic Resonanace Imaging (MRI) scans, computerized tomography (CT) scans, are usually represented as tensors with dimensions that represent regions, subjects, and tissues \citep{de2011predicting}. Moreover, in deep learning architecture, the feature or the internal layers are represented as tensors as well \citep{lecun2015deep}. It becomes increasingly important to have regression models with tensor predictors and tensor responses. For example regressions between fMRI data and EEG data can capture the relation between spatial and temporal information \citep{de2011predicting, jansen2012motion, huster2012methods}. Another example is the regressions from gene expression data from multiple tissues or cells to another genetic variables could indicate the polymorphisms among gene expressions \citep{lock2018tensor, gtex2015genotype}. \par. 
The regularized tensor regressions have been intensively studied. While most of them are either from tensor to scalar \citep{yu2016learning, he2018boosted, zhou2013tensor}, or from scalar to tensor \citep{sun2017provable, sun2017store, li2017parsimonious}. \citep{lock2018tensor, raskutti2019convex} define the tensor on tensor regression. However  \citep{lock2018tensor} only limits to $\ell_2$ regularizer, and \citep{raskutti2019convex} gives the risk bound for $\ell_1$ norm regularizer estimation\par
In this paper, we introduce the decomposable sparse tensor on tensor regression. This is one of the first methods to solve the sparse tensor on tensor regression subject to the low rankness constraint. By rigorously proving that learning a unit rank coefficient tensor can be decomposed into optimizing a standard tensor to scalar regression followed by a tensor decomposition, we intuitively reduce the complexity of the problem. Furthermore, inspired by \citep{hastie2007forward, he2018boosted}, the stagewise search method are adopted to find the local minimum. Experiments under different settings show our method out perform previous methods by incorporating structural information from both predictors and responses.  
\section {Related Work}
Recently the tensor regression and its related applications have been intensively studied. 
\citep{zhou2013tensor, yu2016learning, he2018boosted} propose the low rank tensor regression framework where predictor $\mathcal{X}$ is a tensor and  scalar response $y$ as (\ref{eq:rg1})
\begin{equation} \label{eq:rg1}
\begin{aligned}
    &\mathcal{B} = \argmin_{\mathcal{B}}\mathcal{L}(\mathcal{B};\mathcal{X}, y) + \lambda \Omega(\mathcal{B})\\
    & s.t. rank(\mathcal{B}) \leq R
  \end{aligned}
\end{equation}
where the $\mathcal{X}$ is a tensor with size $M \times d_{1} \times \hdots \times d_{n}$; $y$ is a scalar response vector with size $M \times 1$; $\mathcal{B}$ is the coefficient tensor with size $d_{1} \times \hdots \times d_{n}$ subject to the low rankness constraint;  $\mathcal{L}(\cdot)$ is the loss function, and $\Omega(\cdot)$ is the regularizers which usually are $\ell_{1}$ or $\ell_{2}$ norms.\par
\citep{kolda2009tensor, rabusseau2016low, sun2017store, li2017parsimonious} propose the tensor decomposition models under low rankness constraint(CP decomposition or Tucker decomposition)as following (\ref{1})
\begin{equation} \label{1}
\begin{aligned}
    &\mathcal{B} = \argmin_{\mathcal{B}}\mathcal{L}(\mathcal{B}\times_{n+1} X, \mathcal{Y}) + \lambda \Omega(\mathcal{B})\\
    & s.t. rank(\mathcal{B}) \leq R
  \end{aligned}
\end{equation}

where the $X$ is a matrix with size $d_{p}\times M$; $\mathcal{Y}$ is a tensor responses vector with size $d_{1} \times \hdots \times d_{n}\times M$; $\mathcal{B}$ is the regression tensor with size $d_{1}\times d_{2} \times \hdots \times d_{n}\times d_{p}$ subject to the low rankness constraint; $\mathcal{L}$ is the loss function; and $\Omega(\cdot)$ is the $\ell_{1}$ or $\ell_{2}$ norm regularizer. 

Different with previous works, \citep{lock2018tensor,raskutti2019convex} extend the tensor on scalar or the tensor decomposition (the vectors on tensor regression) to tensor on tensor regression under different constraints. This regression is defined based on contracted tensor product $\llangle \cdot \rrangle_{p}$. The problem is formulated as following 
\begin{equation}
\begin{aligned}
    &\mathcal{B} = \argmin_{\mathcal{B}}\mathcal{L}(\llangle \mathcal{B},  \mathcal{X} \rrangle_{P}, \mathcal{Y}) + \lambda \Omega(\mathcal{B})\\
    & s.t. rank(\mathcal{B}) \leq R
  \end{aligned}
\end{equation}
Notice that the tensor on tensor regression in \citep{lock2018tensor} is only constrained on the $\ell_{2}$ norm. The solution is based on gibbs sampling which is slow and unstable. In this paper, we simplify the problem into two sub-problems under both $\ell_{2}$ and $\ell_{1}$ constraints.  
\section{Preliminaries}
Multi-dimensional array $\mathcal{A} \in \mathbb{R}^{p_{1}\times p_{2}\times ... \times p_{d}}$ is a $p_{1} \times p_{2} \times ... \times p_{d}$ dimension tensor. We introduce some basic tensor operations which are essential for our model formultiona.
\paragraph{Tensor Outer Product} First we define vector outer product. Given two vectors, $a$ with size $n \times 1$ and $b$ with size $m \times 1$. So the outer product of $a$ and $b$ is a $n \times m$ matrix is defined as following
\begin{equation}
    a\otimes b = a \circ b=
    \begin{bmatrix}
     a_{1}b_{1} & a_{1}b_{2}& ...& a_{1}b_{m} \\ 
     a_{2}b_{1} & a_{2}b_{2}& ...& a_{2}b_{m} \\ 
    \vdots&\vdots &\vdots&\vdots \\
     a_{n}b_{1} & a_{n}b_{2}& ...& a_{n}b_{m} \\ 
     \end{bmatrix}
\end{equation}
Then we expand two vectors outer product to k vectors $k > 2$  outer product. $\mathcal{A} = a_{1} \otimes a_{2} \otimes \hdots \otimes a_{k}$  or  $\mathcal{A} = a_{1} \circ a_{2} \circ \hdots \circ a_{k}$ where $a_{i}$, $1 \le i \le k$ is with size $d_{i} \times 1$. So $\mathcal{A}$ is a tensor with size $d_{1} \times d_{2} \hdots \times d_{k}$. Generally, given two tensors $\mathcal{A}$ with size $d_{1} \times \hdots d_{p}$ and $\mathcal{B}$ with size $d_{p+1} \times \hdots d_{p+q}$, $\mathcal{A} \otimes \mathcal{B}$ is a new tensor $\mathcal{C}$ with size $d_{1} \times \hdots d_{p} \times d_{p+1} \times \hdots d_{p+q}$. To be specific, the element of $\mathcal{C}$ can be computed as following 
\begin{equation}
    \mathcal{C}[p_{1}, \hdots, p_{n}, q_{1}, \hdots, q_{m}] = \mathcal{A}[p_{1}, \hdots, p_{n}] \mathcal{B}[q_{1}, \hdots, q_{m}]
\end{equation}
\paragraph{Tensor Inner Product} $\langle \mathcal{X}, \mathcal{Y}\rangle$ if $\mathcal{X}$ and $\mathcal{Y}$ have the same dimension. The tensor inner product can be transformed into vector inner product. 
\begin{equation}
    \langle \mathcal{X}, \mathcal{Y}\rangle = \langle \bf{Vec}(\mathcal{X}), \bf{Vec}(\mathcal{Y})\rangle
\end{equation}
where Vec($\mathcal{X}$) is an operation which flattens a $d_{1}\times d_{2}\times ... \times d_{n}$ tensor to a $d_{1}d_{2}...d_{p} \times 1$ vector. 
\paragraph{Contracted Tensor Product} named by \citep{lock2018tensor} or the general tensor inner product \citep{raskutti2019convex} is that two tensors $\mathcal{A} \in \mathbb{R}^{d_{1}\times ... \times d_{p}\times d_{p+1} \times ... \times d_{p+q} }$ and  $\mathcal{B} \in \mathbb{R}^{d_{p+1}\times ...\times d_{p+q}\times d_{p+q+1}\hdots \times d_{p+q+r}}$,
$\llangle \mathcal{A}, \mathcal{B} \rrangle_{Q} \in \mathbb{R}^{ d_{1} \times ...\times d_{p}\times d_{p+q+1}\times  ... \times d_{p+q+r}} $, where $Q$  denotes the first q modes product. It can be seen as a general matrix product as (\ref{cprod}). 
\begin{equation}
\label{cprod}
\begin{aligned}
    &\llangle \mathcal{A}, \mathcal{B} \rrangle_{Q} [i_{1},\hdots,i_{p}, i_{p+q+1},\hdots,i_{p+q+r}] \\
    &= \sum_{j_{1}=1}^{d_{p+1}}
    ...\sum_{j_{q}=1}^{d_{p+q}} \mathcal{A}[i_{1},\hdots,i_{p}, j_{1}...j_{q}]\mathcal{B}[j_{1}, \hdots, j_{q}, i_{p+q+1} \hdots i_{p+q+r}]
    \end{aligned}
\end{equation}


\paragraph{CP Decomposition} approximates a tensor with a summation of rank-one vectors outer production. The rank of the decomposition is simply the number of the rank-one tensors used to approximate the input tensor: given an input tensor $\mathcal{X} \in \mathbb{R}^{p_{1}\times p_{2}\times ... \times p_{d}}$
\begin{equation}\label{eq:2}
    \mathcal{X} = \sum_{r=1}^{R} \mathbf{a_{1}^{r} \circ a_{2}^{r}\circ \hdots \circ a_{d}^{r}}
\end{equation}
In (\ref{eq:2}) the outer product $\mathbf{a_{1}^{r} \circ a_{2}^{r}\circ \hdots \circ a_{d}^{r}}$ is a $p_{1} \times p_{2} \times \hdots \times p_{d}$ array with entries $(\mathbf {a_{1} \circ a_{2}\circ \hdots \circ a_{d}})_{(i_1\hdots i_d)} = \prod_{k=1}^{d}a_{i_{k}}$ where $\mathbf{a_{k}^{r}} \in \mathbb{R}^{p_{j}}$, $k = 1,2,\hdots , d$. We denote the tensor $\mathcal{X}$ has rank R. The $\mathcal{X} = [\![\mathbf{A}_{1}, \mathbf{A}_{2}, \hdots \mathbf{A}_{d} ]\!]$ where $\mathbf{A}_{k} = [\mathbf{a}_k^{(1)}, \mathbf{a}_k^{(2)}, \hdots, \mathbf{a}_k^{(R)}] \in \mathbb{R}^{p_{k} \times R}$, $k = 1,2,\hdots d$. A lot previous tensor to scalar regression models are based on CP decompostion \citep{he2018boosted, li2017parsimonious, peng2010regularized} 

\paragraph{Tensor Matricization} is transforming a tensor to a matrix which is the operation of reordering the elements of an N-way array into a matrix. Specifically, for a mode-n matricization of $\mathcal{A} \in \mathbb{R}^{d_{1}\times \hdots \times d_{n}}$ is denoted by $A_{(n)}$ with dimension $d_{n} \times d_{1}d_{2}\hdots d_{n-1}$. The elements of $\mathcal{A}[i_{1}, \hdots i_{n}]$ maps to $A[i_{n}, j]$, where
\begin{equation*}
j = 1 + \sum\limits_{\substack{l \neq n \\ l=1}}^{n} (i_{k}-1)J_{k}, \, J_{k} = \prod\limits_{\substack{m=1 \\ m\neq n}}^{k-1}d_{m}
\end{equation*}
\begin{figure} \label{mdl1}
\begin{center}
\includegraphics[scale=0.4]{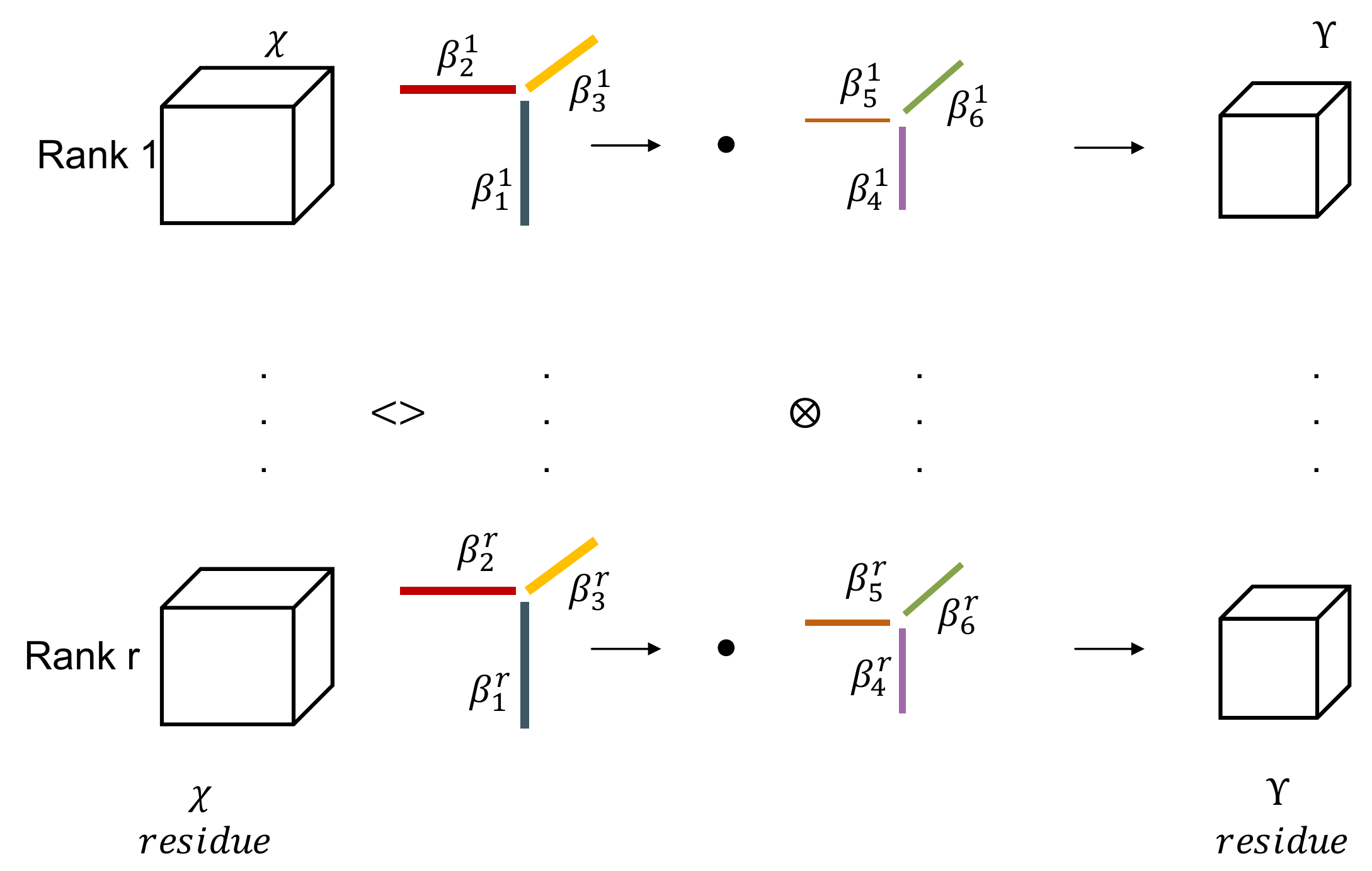}
\caption{The illustration of decomposable 3 modes (3D) tensor on 3D tensor regression. For each rank,  input tensors (residue) $\mathcal{X}$ do inner product with $\mathcal{B}_{P}^{r} = \pmb{\beta}^{r}_{1} \circ \pmb{\beta}^{r}_{2} \circ \pmb{\beta}^{r}_{3}$ to a scalar (black dots). Then the scalar do outer product with $\mathcal{B}_{Q}^{r} = \pmb{\beta}^{r}_{4} \circ \pmb{\beta}^{r}_{5} \circ \pmb{\beta}^{r}_{6}$ }
\end{center}
\end{figure}
\section{Decomposable Sparse Tensor on Tensor Regression}
\subsection{Model Formulation}
For a $m$th data point  $\mathcal{X}_{m}$ $\in \mathbb{R}^{d_{1}\times d_{2} \times \hdots \times d_{p}}$ and a tensor response $m$th response $ \mathcal{Y}_{m}\in \mathbb{R}^{d_{p+1}\times d_{p+2} \times  \hdots \times d_{p+q}}$.  $\mathcal{F}$ is a linear transformation function in tensor space. 
\begin{equation}\label{eq:pr}
  \mathcal{Y}_{m} = \mathcal{F}(\mathcal{X}_{m}) + \varepsilon^{m}
\end{equation}

Where $\varepsilon^{m} \in \mathbb{R}^{d_{p+1}\times d_{p+2} \times  \hdots \times d_{p+q}}$ is the tensor of noise term whose entries are independent and identically distributed centered Gaussian distribution with zero mean and variance $\sigma^{2}$ independent with $\mathcal{X}_{m}$.
Without loss of generality, the intercept is set to zero by centering the response and standardizing the predictors $\sum_{m=1}^{M}{\mathcal{Y}_{m}} = \mathbf{0}$, $\sum_{m=1}^{M}{\mathcal{X}_{m}} = \mathbf{0}$. Then the function $\mathcal{F}$ can be formalized as a tensor contracted product. There exists $\mathcal{B} \in \mathbb{R}^{d_{1}\times \hdots \times d_{p} \times d_{p+1} \times \hdots \times d_{p+q}} $. $P$ denotes as first $p$ modes contracted product.  
\begin{equation}\label{eq:md1}
 \mathcal{Y}_{m}  = \llangle \mathcal{X}_{m}, \mathcal{B} \rrangle_{P} + \varepsilon^{m}
\end{equation}
So our goal is to estimate $\mathcal{B}$ given the i.i.d samples $\{{\mathcal{X}_{m}, \mathcal{Y}_{m}}\}$. In order to reduce the complexity and enforce the interpretability and structural information, we impose the sparse and low rankness constraints on regression. Particularly, we assume $\mathcal{B}$ can be CP decomposed, $\mathcal{B} = \sum_{r=1}^R w \circ \pmb{\beta}_{1}^{r}\circ \hdots \circ  \pmb{\beta}_{p}^{r} \circ \pmb{\beta}_{p+1}^{r} \hdots  \circ  \pmb{\beta}_{p+q}^{r}$, where $\parallel \pmb{\beta}_i^{r} \parallel = 1$.  Here we call first $p$ vectors are contraction vectors, last $q$ vectors are generation vectors, and $w$ is scalar value indicated the multitude of the coefficient tensor. So the problem (\ref{eq:md1}) can be rewritten into (\ref{eq:md2})

\begin{equation}\label{eq:md2}
 \begin{aligned}
      &\min_{\mathcal{B}} \frac{1}{M} \sum_{m=1}^{M}\parallel \mathcal{Y}_{m} - \llangle \mathcal{B}, \mathcal{X}_{m} \rrangle_{P} \parallel_F^2 + \alpha \parallel \mathcal{B} \parallel_{F}^2 + \lambda \parallel \mathcal{B} \parallel_1 \\
      & s.t.  \;  \mathcal{B} = \sum_{r=1}^R w\circ \pmb{\beta}_{1}^{r}\circ \hdots \circ \pmb{\beta}_{p}^{r} \circ  \pmb{\beta}_{p+1}^{r} \hdots  \circ \pmb{\beta}_{p+q}^{r}
     \end{aligned}
\end{equation}

\begin{lemma}\label{le1}

Given two tensors  $\mathcal{A} \in \mathbb{R}^{d_{1}\times ... \times d_{p}}$ and  $\mathcal{B} \in \mathbb{R}^{d_{1}\times ... \times d_{p} \times d_{p+1} \times ... \times d_{p+q}}$, if $\mathcal{B}$ can be CP decomposed into rank $1$ vectors,  $\mathcal{B} = b \circ \pmb{\beta}_{1}\circ \hdots  \pmb{\beta}_{p} \circ  \pmb{\beta}_{p+1} \hdots \circ  \pmb{\beta}_{p+q} $. Then the contracted product $\llangle \mathcal{A}, \mathcal{B} \rrangle_{P} =\langle \mathcal{A}, \mathcal{B}_{P} \rangle \otimes \mathcal{B}_{Q} $ where $\mathcal{B}_{P} = b\circ \pmb{\beta}_{1}\circ \hdots \pmb{\beta}_{p}$ and $\mathcal{B}_{Q} =\pmb{\beta}_{p+1} \hdots \pmb{\beta}_{p+q} $
\end{lemma}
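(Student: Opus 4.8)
The plan is to prove the identity entrywise, since both sides are tensors of shape $d_{p+1}\times\cdots\times d_{p+q}$ and it suffices to verify that every coordinate agrees. First I would instantiate the definition of the contracted product in (\ref{cprod}) for the present situation, in which $\mathcal{A}$ carries exactly the $p$ modes that are contracted (and no free leading modes), while $\mathcal{B}$ carries those same $p$ modes as its leading indices followed by the $q$ free modes. Fixing an output index $(i_{p+1},\ldots,i_{p+q})$, this reads
\begin{equation*}
\llangle \mathcal{A}, \mathcal{B} \rrangle_{P}[i_{p+1},\ldots,i_{p+q}] = \sum_{j_1=1}^{d_1}\cdots\sum_{j_p=1}^{d_p} \mathcal{A}[j_1,\ldots,j_p]\,\mathcal{B}[j_1,\ldots,j_p,i_{p+1},\ldots,i_{p+q}].
\end{equation*}

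Next I would substitute the rank-one CP form of $\mathcal{B}$, whose entries factor completely as $\mathcal{B}[j_1,\ldots,j_p,i_{p+1},\ldots,i_{p+q}] = b\,(\pmb{\beta}_1)_{j_1}\cdots(\pmb{\beta}_p)_{j_p}\,(\pmb{\beta}_{p+1})_{i_{p+1}}\cdots(\pmb{\beta}_{p+q})_{i_{p+q}}$. The crucial observation, and really the only content of the lemma, is that the \emph{generation} factors $(\pmb{\beta}_{p+1})_{i_{p+1}}\cdots(\pmb{\beta}_{p+q})_{i_{p+q}}$ do not depend on the contraction indices $j_1,\ldots,j_p$, so they pull out of the multiple sum. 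What remains inside is $\sum_{j_1,\ldots,j_p}\mathcal{A}[j_1,\ldots,j_p]\,\mathcal{B}_P[j_1,\ldots,j_p]$ with $\mathcal{B}_P[j_1,\ldots,j_p] = b\,(\pmb{\beta}_1)_{j_1}\cdots(\pmb{\beta}_p)_{j_p}$, which is precisely the standard inner product $\langle \mathcal{A}, \mathcal{B}_P\rangle$ of two tensors of identical shape (equivalently $\langle \mathbf{Vec}(\mathcal{A}),\mathbf{Vec}(\mathcal{B}_P)\rangle$), and the pulled-out factors are exactly $\mathcal{B}_Q[i_{p+1},\ldots,i_{p+q}]$.

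Finally I would reassemble the pieces: the computation yields $\llangle \mathcal{A}, \mathcal{B}\rrangle_P[i_{p+1},\ldots,i_{p+q}] = \langle \mathcal{A}, \mathcal{B}_P\rangle \cdot \mathcal{B}_Q[i_{p+1},\ldots,i_{p+q}]$ for every output index. Since $\langle \mathcal{A}, \mathcal{B}_P\rangle$ is a scalar, scaling every entry of $\mathcal{B}_Q$ by it is by definition the outer product $\langle \mathcal{A}, \mathcal{B}_P\rangle \otimes \mathcal{B}_Q$, which is the asserted identity; a quick shape count (the $p$ modes of $\mathcal{A}$ are all contracted, leaving the $q$ modes of $\mathcal{B}_Q$) confirms both sides lie in $\mathbb{R}^{d_{p+1}\times\cdots\times d_{p+q}}$. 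There is no genuine obstacle here, as the argument is pure index bookkeeping; the point worth emphasizing is that the separability of the rank-one decomposition is exactly what licenses pulling the generation factors out of the contraction sum. For a higher-rank $\mathcal{B}$ one would instead obtain $\sum_r \langle \mathcal{A}, \mathcal{B}_P^{r}\rangle \otimes \mathcal{B}_Q^{r}$, which is the mechanism the paper later exploits to split the estimation into a tensor-to-scalar regression followed by a decomposition.
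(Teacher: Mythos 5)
Your proposal is correct and follows essentially the same route as the paper's own proof: both expand $\llangle \mathcal{A}, \mathcal{B} \rrangle_{P}$ entrywise via (\ref{cprod}), substitute the rank-one CP factorization so that $\mathcal{B} = \mathcal{B}_P \otimes \mathcal{B}_Q$, pull the generation factors (which are independent of the contraction indices) out of the sum, and identify the remaining sum as $\langle \mathcal{A}, \mathcal{B}_P \rangle$. If anything, your write-up is cleaner than the paper's, since it states explicitly that the separability of the rank-one form is what licenses the factor extraction, whereas the paper's displayed computation leaves this step implicit and carries some index-notation slips.
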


\begin{proof}
\begin{equation*}
\begin{aligned}
    & \llangle \mathcal{A}, \mathcal{B} \rrangle_{P} [i_{p+1},\hdots, i_{p+q}] \\
    & = \sum_{i_{1},\hdots i_{p}}
    \mathcal{A}[i_{1} \hdots i_{p}] \mathcal{B}[i_{1} \hdots i_{p},i_{p+1}\hdots i_{p+q}] \\
    & = \sum_{i_{1},\hdots i_{p}} \mathcal{A}[i_{1},\hdots i_{p}][b \circ \pmb{\beta}_{1}\circ \hdots \pmb{\beta}_{p} \circ \pmb{\beta}_{p+1} \hdots \circ \pmb{\beta}_{p+q}][[i_{1} \hdots i_{p},i_{p+1}\hdots i_{p+q}]]\\
    & = \sum_{i_{1},\hdots i_{p}} \mathcal{A}[i_{1},\hdots i_{p}][ \mathcal{B}_{P} \otimes \mathcal{B}_{Q}][i_{1} \hdots i_{p},i_{p+1}\hdots i_{p+q}]\\
    & =  (\sum_{p_{1},\hdots p_{d}}\mathcal{A}[i_{1},\hdots i_{p}]\mathcal{B}_{P}[i_{1} \hdots i_{p}]) \otimes \mathcal{B}_{Q}[i_{p+1}\hdots i_{p+q}]\\
    & =   \langle \mathcal{A}, \mathcal{B}_{P} \rangle  \otimes \mathcal{B}_{Q}[i_{p+1}\hdots i_{p+q}]
\end{aligned}
\end{equation*}
\end{proof}

Given the Lemma \ref{le1} , a unit rank contracted product under CP decomposition constraint, can be decomposed into an inner product and a outer product. Specifically, the inner product contracts the tensor predictors $\mathcal{X}$ to a scalar then the outer product maps scalar which can be seen as 1 dimensional hidden space to the $\mathcal{Y}$ tensor space. So for a specific rank $r$, where $1\leq r \leq R$, (\ref{eq:md2}) can be transformed into
\begin{equation}\label{eq:md3}
 \begin{aligned}
      &\argmin_{\mathcal{B}} \frac{1}{M} \sum_{m=1}^{M}\parallel \mathcal{Y}_{m} - \langle \mathcal{X}_{m}, \mathcal{B}_{p}^{r}\rangle \otimes \mathcal{B}_{q}^{r}\ \parallel_F^2 \\
      & +\alpha \parallel \mathcal{B}_{p}^{r} \otimes \mathcal{B}_{q}^{r} \parallel_{F}^2 + \lambda \parallel \mathcal{B}_{p}^{r} \otimes \mathcal{B}_{q}^{r} \parallel_1 \\
      & s.t.  \;  \mathcal{B}_{P}^{r} = w_{p} \circ \pmb{\beta}_{1}^{r}\circ \hdots \circ  \pmb{\beta}_{p}^{r} ;
      \ \mathcal{B}_{Q}^{r} =   w_{q} \circ \pmb{\beta}_{p+1}^{r} \hdots  \circ \ \pmb{\beta}_{p+q}^{r}
     \end{aligned}
\end{equation}

\begin{remark}
From previous analysis, the unit rank tensor on tensor regression (\ref{eq:md2}), can be simply transformed into a tensor to scalar regression and a tensor decomposition problem. We will give the rigorous proof in next section.
\end{remark}
Due to the equivalence between N-mode product and inner product \citep{he2018boosted, kolda2009tensor}, the (\ref{eq:md3}) can be rewritten into following (\ref{eq:4})
\begin{equation}\label{eq:4}
 \begin{aligned}
      &\argmin_{\mathcal{B}_P^{r}, \mathcal{B}_Q^{r}}\frac{1}{M} \sum_{m=1}^{M}\parallel \mathcal{Y}_{m} - (\mathcal{X}_{m} \times_{1} \ w_{p}\pmb{\beta}_{1}^{r} \hdots \times_{p}  \pmb{\beta}_{p}^{r}) \otimes \mathcal{B}_{Q}^{r} \parallel_F^2 \\
      &+ \alpha \parallel  \mathcal{B}_{P}^{r} \otimes \mathcal{B}_{Q}^{r}  \parallel_{F}^2 + \lambda \parallel  \mathcal{B}_{P}^{r} \otimes \mathcal{B}_{Q}^{r} \parallel_1 \\
        & s.t.  \;  \mathcal{B}_{P}^{r} =  \ w_{p} \circ \pmb{\beta}_{1}^{r}\circ \hdots \circ  \pmb{\beta}_{p}^{r} ;
      \ \mathcal{B}_{Q}^{r} = w_q \circ \pmb{\beta}_{p+1}^{r} \hdots  \circ  \pmb{\beta}_{p+q}^{r}
     \end{aligned}
\end{equation}
(\ref{eq:4}) shows a way to turn tensor-wised optimization into a vector-wised optimization. 
\section{Decomposable Sparse Tensor on Tensor Regression}
In this section,  we will discuss the details about  optimization of the object function.
We propose the decomposable sparse tensor on tensor regression denoted as DST2R. Decomposable regression naturally comes from decomposition of the unit rank coefficient tensor $\mathcal{B}$ by $\mathcal{B}_Q \otimes \mathcal{B}_{P}$ which are optimized alternatively. We denote the optimization of $\mathcal{B}_{P}$ as contraction tensor and $\mathcal{B}_{Q}$ as generation tensor.
\begin{theorem}\label{th1}
For a unit rank coefficient tensor $\mathcal{B}$,  the problem of (\ref{eq:md2}) can be solved by a constrained tensor to scalar regression when fixing $\mathcal{B}_{Q}$ and a constrained tensor decomposition by fixing $\mathcal{B}_{P}$
\end{theorem}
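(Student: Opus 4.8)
The plan is to substitute the rank-one decomposition supplied by Lemma~\ref{le1} into the unit-rank instance of (\ref{eq:md2}) and then to show that, once one of the two blocks $\mathcal{B}_P$ or $\mathcal{B}_Q$ is held fixed, the remaining optimization collapses to one of the two stated classical problems. Writing $s_m = \langle \mathcal{X}_m, \mathcal{B}_P\rangle$ for the scalar produced by contraction, Lemma~\ref{le1} turns the fitting term into $\sum_m \|\mathcal{Y}_m - s_m\,\mathcal{B}_Q\|_F^2$. The first ingredient I would establish is that both penalties factor across the outer product, i.e. $\|\mathcal{B}_P\otimes\mathcal{B}_Q\|_F^2 = \|\mathcal{B}_P\|_F^2\,\|\mathcal{B}_Q\|_F^2$ and $\|\mathcal{B}_P\otimes\mathcal{B}_Q\|_1 = \|\mathcal{B}_P\|_1\,\|\mathcal{B}_Q\|_1$; this follows directly from the entrywise definition of $\otimes$, since each entry of the outer product is a product of one entry from each factor.

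Next I would treat the $\mathcal{B}_Q$-fixed subproblem. Here $\|\mathcal{B}_Q\|_F^2$ and $\|\mathcal{B}_Q\|_1$ are constants, so by the factorization above the two penalties become $\alpha' \|\mathcal{B}_P\|_F^2$ and $\lambda'\|\mathcal{B}_P\|_1$ with rescaled weights $\alpha' = \alpha\|\mathcal{B}_Q\|_F^2$ and $\lambda' = \lambda\|\mathcal{B}_Q\|_1$. For the loss I would complete the square in $s_m$: expanding $\|\mathcal{Y}_m - s_m\mathcal{B}_Q\|_F^2$ and collecting the $s_m$-dependent terms shows that, up to an additive constant, minimizing over $s_m$ is equivalent to minimizing $\|\mathcal{B}_Q\|_F^2\sum_m (s_m - \tilde y_m)^2$ with scalar pseudo-responses $\tilde y_m = \langle \mathcal{Y}_m,\mathcal{B}_Q\rangle / \|\mathcal{B}_Q\|_F^2$. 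Since $s_m = \langle \mathcal{X}_m,\mathcal{B}_P\rangle$ and $\mathcal{B}_P$ carries the CP constraint $\mathcal{B}_P = w_p\circ\pmb{\beta}_1\circ\cdots\circ\pmb{\beta}_p$, this is precisely a low-rank, sparsity-regularized tensor-to-scalar regression of the form (\ref{eq:rg1}).

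For the $\mathcal{B}_P$-fixed subproblem the scalars $s_m$ are now constants, so the penalties reduce to scaled $\|\mathcal{B}_Q\|_F^2$ and $\|\mathcal{B}_Q\|_1$ terms by the same factorization. Completing the square in $\mathcal{B}_Q$ this time, with $S = \sum_m s_m^2$ and $\mathcal{T} = \sum_m s_m\mathcal{Y}_m$, rewrites the loss (up to a constant) as $S\,\|\mathcal{B}_Q - \mathcal{T}/S\|_F^2$. Minimizing this subject to $\mathcal{B}_Q = w_q\circ\pmb{\beta}_{p+1}\circ\cdots\circ\pmb{\beta}_{p+q}$ is exactly the problem of finding the best penalized rank-one approximation to the fixed target tensor $\mathcal{T}/S$, i.e. a constrained tensor decomposition. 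Assembling the two reductions yields the alternating scheme asserted by the theorem.

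I expect the main obstacle to be the bookkeeping around the shared scale $w$: because the unit-norm constraint pins down the factor directions but not how the magnitude is split between $\mathcal{B}_P$ and $\mathcal{B}_Q$, I would need to fix a convention (for instance, absorbing $w$ into whichever block is currently being optimized) so that each subproblem is well posed and the rescaled weights $\alpha',\lambda'$ are unambiguous. The norm-factorization identities and the two completions of the square are otherwise routine; the substantive content is recognizing that after these manipulations the objective literally matches the template problem (\ref{eq:rg1}) on one side and a penalized rank-one CP fit on the other.
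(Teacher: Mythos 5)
Your proposal is correct, and it shares the paper's overall skeleton: invoke Lemma~\ref{le1} to write the fit term as $\sum_m\|\mathcal{Y}_m - \langle\mathcal{X}_m,\mathcal{B}_P\rangle\,\mathcal{B}_Q\|_F^2$, then alternate between the two blocks, identifying the $\mathcal{B}_Q$-fixed subproblem with a tensor-to-scalar regression and the $\mathcal{B}_P$-fixed subproblem with a rank-one tensor decomposition. Where you genuinely diverge is in how the first identification is made. The paper's sketch flattens $\mathcal{Y}_m$ and $\mathcal{B}_Q$, strips the zero entries of $\mathcal{B}_Q$, and divides entrywise, claiming equivalence to $\sum_m\sum_k \bigl(Y_m^*[k]/B_Q^*[k]-\langle\mathcal{X}_m,\mathcal{B}_P\rangle\bigr)^2$; strictly, that transformation is only an equivalence up to a reweighting of the residuals (the true expansion carries weights $B_Q^*[k]^2$ on each coordinate), and it is awkward precisely when $\mathcal{B}_Q$ is sparse. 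Your route instead completes the square in the scalar $s_m$, producing exact pseudo-responses $\tilde y_m=\langle\mathcal{Y}_m,\mathcal{B}_Q\rangle/\|\mathcal{B}_Q\|_F^2$ and an objective $\|\mathcal{B}_Q\|_F^2\sum_m(s_m-\tilde y_m)^2$ that matches template (\ref{eq:rg1}) with no loss of exactness and no special handling of zeros; similarly, on the generation side you collapse the multi-target problem the paper cites to a single penalized rank-one approximation of $\mathcal{T}/S$. You also supply two ingredients the paper uses only implicitly: the multiplicativity of both penalties over outer products ($\|\mathcal{B}_P\otimes\mathcal{B}_Q\|_F^2=\|\mathcal{B}_P\|_F^2\|\mathcal{B}_Q\|_F^2$ and $\|\mathcal{B}_P\otimes\mathcal{B}_Q\|_1=\|\mathcal{B}_P\|_1\|\mathcal{B}_Q\|_1$), which justifies passing from penalties on $\mathcal{B}$ in (\ref{eq:md2}) to per-block penalties, and the explicit scale convention for $w$, which the paper handles by introducing $w_p$ and $w_q$. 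In short, your argument buys exactness and self-containedness at the key reduction step, while the paper's version buys brevity by leaning on the cited tensor-to-scalar framework.
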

\begin{proof}(sketch)
Firstly fix $\mathcal{B}_Q$, we have following objective function  $\argmin_{\mathcal{B}_{P}} \frac{1}{M} \sum_{m=1}^{M}\parallel \mathcal{Y}_{m} - \langle \mathcal{X}_{m} , \mathcal{B}_P^{r}\rangle \otimes \mathcal{B}_{Q}^{r} \parallel_F^2 + \alpha \parallel  \mathcal{B}_{P}^{r}  \parallel_{F}^2 + \lambda \parallel  \mathcal{B}_{p}^{r} \parallel_1 $ such that $\mathcal{B}_{P}^{r} = \pmb{\beta}_{p_{1}}^{r}\circ \hdots \circ  \pmb{\beta}_{p_{n}}^{r}$. Since $\mathcal{B}_{Q}$ is fixed, we can flatten $\mathcal{Y}$ and $\mathcal{B}_Q$ to vector space.  $\argmin_{\mathcal{B}_{P}} \frac{1}{M} \sum_{m=1}^{M}\parallel Vec(\mathcal{Y}_{m}) - \langle \mathcal{X}_{m} , \mathcal{B}_P^{r}\rangle Vec(\mathcal{B}_{Q}^{r}) \parallel_F^2 + \alpha \parallel  \mathcal{B}_{p}^{r}  \parallel_{F}^2 + \lambda \parallel  \mathcal{B}_{p}^{r} \parallel_1 $, given $\langle \mathcal{X}_m, \mathcal{B} \rangle$ is a scalar. This problem is equivalent to  $\argmin_{\mathcal{B}_{P}} \frac{1}{M} \sum_{m=1}^{M} \sum_{k=1}^{d_{u+1}\hdots d_{u+v}} \parallel \frac{Y_{m}^{*}[k]}{B_{Q}^{*}[k]}  - \langle \mathcal{X}_{m} , \mathcal{B}_P^{r}\rangle  \parallel_F^2 $, where $B_{Q}^{*}$ and ${Y_{m}}^{*}$ are vectorized $\mathcal{B}_Q$ and $\mathcal{Y}_{m}$ with removing 0 entries in  $\mathcal{B}_Q$ and corresponding $\mathcal{Y}_{m}$ at the same coordinates. Clearly this is a tensor to scalar regression problem as defined in previous work \citep{he2018boosted}. \\
Next let's fix  $\mathcal{B}_P$ and optimize $\mathcal{B}_Q$. The problem turns into $\argmin_{\mathcal{B}_{Q}} \frac{1}{M} \sum_{m=1}^{M}\parallel \mathcal{Y}_{m} - \langle \mathcal{X}_{m} , \mathcal{B}_P^{r}\rangle \otimes \mathcal{B}_{Q}^{r} \parallel_F^2 + \alpha \parallel  \mathcal{B}_{Q}^{r}  \parallel_{F}^2 + \lambda \parallel  \mathcal{B}_{Q}^{r} \parallel_1 $ such that $\mathcal{B}_{q}^{r} = \pmb{\beta}_{p+1}^{r}\circ \hdots \circ  \pmb{\beta}_{p+q}^{r}$. Given $\langle \mathcal{X}_{m} , \mathcal{B}_P^{r}\rangle$ is a scalar, the problem is equivalent to $\argmin_{\pmb{\beta}_{p+1}^{r} \hdots \pmb{\beta}_{p+q}^{r}} \frac{1}{M} \sum_{m=1}^{M}\parallel \mathcal{Y}_{m} - w_{m, Q} \pmb{\beta}_{p+1}^{r}\circ \hdots \circ  \pmb{\beta}_{p+q}^{r} \parallel_F^2 + \alpha \parallel  \mathcal{B}_{Q}^{r}  \parallel_{F}^2 + \lambda \parallel  \mathcal{B}_{Q}^{r} \parallel_1 $ such that $\mathcal{B}_{q}^{r} = \pmb{\beta}_{p+1}^{r}\circ \hdots \circ  \pmb{\beta}_{p+q}^{r}$. This is a standard tensor decomposition problem \citep{sun2017provable}. 

\end{proof}
Given the theorem (\ref{th1}), DST2R model can be illustrated in figure (\ref{mdl1}).

\subsection{Contraction Part}
$\langle \mathcal{X}, \mathcal{B}_{P} \rangle$ is called contraction part given mapping a tensor to a one dimensional hidden space(scalar). The optimization of the contraction tensor $ \mathcal{B}_{P}^r$ is searching for the optimal $\mathcal{B}_{P}^{r}$ for a  rank $r$, with $\mathcal{B}_{Q}^{r}$ is fixed. Given $\mathcal{B}_{P}^{r}$ is constrained by CP decomposition, the loss function can be defined as (\ref{c1})
\begin{equation}\label{c1}
 \begin{aligned}
      \argmin_{\mathcal{B}_{P}} &\frac{1}{M} \sum_{m=1}^{M}\parallel \mathcal{Y}_{m} - \langle \mathcal{X}_{m} , \mathcal{B}_P^{r}\rangle \otimes \mathcal{B}_{Q}^{r} \parallel_F^2 \\
      &+ \alpha \parallel  \mathcal{B}_{p}^{r}  \parallel_{F}^2 + \lambda \parallel  \mathcal{B}_{p}^{r} \parallel_1 \\
        & s.t.  \;  \mathcal{B}_{p}^{r} =  w_{p}^{r} \circ \pmb{\beta}_{1}^{r}\circ \hdots \circ  \pmb{\beta}_{p}^{r}
     \end{aligned}
\end{equation}

Let $w_{p}^{r} \geq 0$, $\parallel \pmb{\beta}_{k}^{r}\parallel_{1} = 1$, so that $ \pmb{\beta}_{k}^{r}$, $1 \leq k \leq p$ is identifiable up to sign flipping \citep{he2018boosted}. (\ref{c1}) can be reformalized into (\ref{c2})
\begin{equation}\label{c2}
 \begin{aligned}
      \argmin_{\pmb{\beta}_{k}^{r}} &\frac{1}{M} \sum_{m=1}^{M}\parallel \mathcal{Y}_{m} - (\mathcal{X}_{m} \times_{1} \ w_{p}\pmb{\beta}_{1}^{r} \hdots \times_{p}  \pmb{\beta}_{p}^{r}) \otimes \mathcal{B}_{Q}^{r} \parallel_F^2 \\
      &+ \alpha  w_{p}^{r2} \parallel  \pmb{\beta}_{k}^{r}  \parallel_{F}^2 + \lambda w_{p}^{r} \parallel  \pmb{\beta}_{k}^{r} \parallel_1 \\
        & s.t.   w_{p} \geq 0 , \;   \parallel  \pmb{\beta}_{k}^{r} \parallel_{1} = 1, k = 1, ..., p;
     \end{aligned}
\end{equation}

As the theorem \ref{th1} shown, contraction part can be seen as a low rank sparse tensor to scalar regression. Inspired by previous works \citep{he2018boosted, kolda2009tensor, sun2017store}, each mode pairs $(w_p , \pmb{\beta_{k}})$, $1 \leq k \leq p$ is optimized alternatively. Denote $\hat{\pmb{\beta}_{k}} = w_{p}\pmb{\beta}_{k}$. So (\ref{c2}) can be reformulated into (\ref{c3}).
\begin{equation} \label{c3}
    \begin{aligned}
     \argmin_{\hat{\pmb{\beta}_{k}}}
     & \frac{1}{M} \sum_{m=1}^{M}\parallel Vec(\mathcal{Y}_{m}) -\mathbf{Z}^{(-k)T}_{m(1)} \hat{\pmb{\beta}_{k}} \parallel_2^2 \\
     & + \alpha \sigma^{(-k)} \parallel \ \hat{\pmb{\beta}_{k}} \parallel_{2}^2 + \lambda \parallel \hat{\pmb{\beta}_{k}} \parallel_1 \\
    \end{aligned}
\end{equation}
$Vec(\mathcal{Y}_m)$ is the vectorization of $m$th sample $\mathcal{Y}_m$ with dimension $d_{p+1}d_{p+2}\hdots d_{p+q} \times 1$; $\mathbf{Z}^{(-k)}_{m} = \mathcal{X}_{m}\times_{1} \hat{\pmb{\beta}}_{1} \hdots \times_{k-1} \hat{\pmb{\beta}}_{k-1} \times_{k+1} \hat{\pmb{\beta}}_{k+1} \hdots \times_{p} \hat{\pmb{\beta}}_{p}) \otimes \mathcal{B}^{r}_Q$ with dimension $d_{k}\times d_{p+1}\times \hdots \times d_{p+q}$; $\mathbf{Z}_{m(1)}^{(-k)}$ is the matricization of the first mode, hence the dimension is $d_{k} \times d_{p+1} \hdots d_{p+q}$. $\sigma^{-k} = \prod_{l \neq k}^{p+q} \parallel \pmb{\beta}_{l} \parallel_{2}^2  $.
In order to simplify the computation, we introduce the augment variables, $\hat{\mathbf{y}}_{m} =[Vec(\mathcal{Y}_{m}), 0]^{T}$, $\hat{\mathbf{Z}}^{(-k)}_{m} = [\mathbf{Z}^{(-k)}_{m, (1)}, \sqrt{\alpha \sigma^{(-k)}}\mathbf{I}_{k}]$, $\mathbf{\hat{e}}_{m}= \hat{\mathbf{y}}_{m} - \hat{\mathbf{Z}}^{(-k)}_{m}  \hat{\pmb{\beta}}_{k}$, where $I_{k}$ is the unit vector with length $d_{k}$, $1 \leq k \leq p$. Then the object function can be rewritten into (\ref{c4})
\begin{equation} \label{c4}
    \begin{aligned}
     \argmin_{\pmb{\beta}_{k}}
     & \frac{1}{M} \sum_{m=1}^{M}\parallel \hat{\mathbf{y}}_{m} -\hat{\mathbf{Z}}^{(-k)}_{m} \hat{\pmb{\beta}}_{k} \parallel_2^2 + \lambda \parallel  \pmb{\beta}_{k} \parallel_1 \\
    \end{aligned}
\end{equation}

(\ref{c4}) is a standard lasso objective function. Inspired by stagewise search lasso \citep{hastie2007forward} and the extended work on sparse tensor regression \citep{he2018boosted}, we introduce decomposable sparse tensor on tensor regression denoted DST2R. The general idea of a stagewise search is to  gradually increase or decrease the values of coefficients of the model after appropriate initialization. For the linear regression, the forward step / backward search is to find the best predictor in terms of current residual and increase/decrease its coefficient by a small step. Both contraction part and generation part adopt stagewise search . \par
In order to simplify the notation of (\ref{c4}), we define (\ref{c5})
\begin{equation} \label{c5}
    \begin{aligned}
     J(\pmb{\beta}_{k}) = L(\pmb{\beta}_{k}) + \mathcal{R}({\pmb{\beta}_{k}})
    \end{aligned}
\end{equation}
Here $J(\cdot)$ denotes the object function, $L(\cdot)$ is the $\parallel \cdot \parallel_{F}$ term, and $\mathcal{R}(\cdot)$ is the $\ell_1$ norm regularizer. \par
Firstly, let's define $s = \pm \epsilon$ which is the step size controlling the fineness of the searching grid. During the backward stage, the optimal index/coordinate $i_{k}$ to be selected for a specific $ \hat{\pmb{\beta}_{k}}$,  where $1 \leq i_{k} \leq d_{k}, \;  1 \leq k \leq p$. Hence $L( \pmb{\beta}_{k} - sI_{i_{k}})$ where $I_{i_{k}}$ is a $d_{k}$ length vector with $i_{k}$th entry being 1, rest 0. We define the backward stage search which decreases of the value of $\pmb{\beta}_{k}$ $i_{k}$ th entry. 
\begin{equation} \label{c6}
    \begin{aligned}
    L(\hat{\pmb{\beta}}_{k} - s I_{i_{k}})= & \frac{1}{M} \sum_{m=1}^{M} Tr(\hat{\mathbf{e}}_{m}^{T}\hat{\mathbf{e}}_{m}) \\
    & +s^{2} Tr(I_{i_{k}}^{T}\mathbf{\hat{Z}_{m}}^{(-k)} \mathbf{\hat{Z}_{m}}^{(-k)T} I_{k}) \\
   & +2s Tr(I_{i_{k}}^{T}\hat{\mathbf{Z}_{m}}^{(-k)}\mathbf{\hat{e}}_{m})
    \end{aligned}
\end{equation}
Where  $\mathbf{\hat{e}}_{m}= \hat{\mathbf{y}}_{m} - \hat{\mathbf{Z}}^{(-k)}_{(m)} \hat{\pmb{\beta}_{k}}$, $Tr(\cdot)$ is the trace of a matrix, and $s = sign(\hat{\pmb{\beta}_{k}}[i_{k}]) * \epsilon$.

Similarly, we define the forward stage  $L( \hat{\pmb{\beta}}_{k} + sI_{i_{k}})$ which increases the value of of $\pmb{\beta}_{k}$ $i_{k}$ th entry. 
\begin{equation} \label{c7}
    \begin{aligned}
    L(\hat{\pmb{\beta}} + s I_{i_{k}})= &\frac{1}{M} \sum_{m=1}^{M} Tr(\mathbf{\hat{e}}_{m}^{T}\mathbf{\hat{e}}_{m})\\
    & + s^{2} Tr(I_{i_{k}}^{T}\hat{\mathbf{Z}}_{m}^{(-k)T} \hat{\mathbf{Z}}_{m}^{(-k)} I_{i_{k}}) \\
    & -2s Tr( I_{i_{k}}^{T}\hat{\mathbf{Z}}_{m}^{(-k)}\mathbf{\hat{e}}_{m})
    \end{aligned}
\end{equation}

 Since for each iteration, the term $Tr(\mathbf{\hat{e}}_{m}^{T}\mathbf{\hat{e}}_{m})$ is a constant, so we can just drop this term. Furthermore, the calculation of trace can be vectorized into $Diag$ as following term to reduce the computation as (\ref{s1}), (\ref{s2}).
\begin{equation}
\begin{aligned}
(k^{*}, i_{k}^{*}) = 
& \argmin_{i_{k}} \frac{1}{M} \sum_{m=1}^{M}  s^{2} Diag(\hat{\mathbf{Z}}_{m}^{(-k)}\hat{\mathbf{Z}}_{m}^{(-k)T}) \\
& +2s (\hat{\mathbf{Z}}^{(-k)}\mathbf{\hat{e}}_{m})
\end{aligned}
\label{s1}
\end{equation}
\vline
\begin{equation}
\begin{aligned}
(k^{*}, i_{k}^{*}) = 
& \argmin_{i_{k}} \frac{1}{M} \sum_{m=1}^{M}  s^{2} Diag(\hat{\mathbf{Z}}_{m}^{(-k)}\hat{\mathbf{Z}}_{m}^{(-k)T}) \\
& -2s (\hat{\mathbf{Z}}^{(-k)}\mathbf{\hat{e}}_{m})
\end{aligned}
\label{s2}
\end{equation}
The procedure of contraction part is \textbf{Algorithm 2}

\begin{algorithm}
\caption{The Procedure of DST2R}
\begin{algorithmic}
\Procedure{DST2R}{$\epsilon, R, \gamma$}  
    \State Initialize $\mathcal{B}_P^{1}, \mathcal{B}_Q^{1}, \lambda^{1}_{0}$, Res=$\mathcal{Y}$, $r=1, t=0, \sigma_{0}$,  
    \For $\,  r = 1 : R$
    \While{$\lambda^{r}_{t} \geq 0$} 
        \State Calculate error term $\mathbf{\hat{e}}^{(r)}$
        \State Select the optimal $k$ and $i_{k}$ by \textbf{contract}$(\mathcal{B}^{r} ,\mathbf{\hat{e}}^{(r)}, \epsilon, \gamma, \hat{\mathbf{Z}}, t, \lambda^{r}_{t})$, for $k = 1 \hdots p$;
        \State Update $\hat{\mathbf{Z}}^{(-k)}$, $\pmb{\beta}_{k}, \sigma_{t}, w_{p}^{t}$
        \State Calculate error term $\mathbf{\hat{e}}^{(r)}$
        \State Select the optimal $k$ and $i_{k}$ by \textbf{generate}$(\mathcal{B}^{r}, \mathbf{\hat{e}}^{(r)}, \epsilon, \gamma, \hat{\mathbf{Z})}, \lambda^{r}_{t})$  for $k = p+1 \hdots p+q$;
        \State Update $\hat{\mathbf{Z}}^{(-k)}$, $\lambda_{t}^{r}$, $\pmb{\beta}_{k}, \sigma_{t}, w_{q}^{t}$
    \EndWhile 
    \State $\textbf{end while}$
    \State Res = $\mathcal{Y} - \mathcal{Y}_{m} - \langle \mathcal{X}_{m} , \mathcal{B}_P^{r}\rangle \otimes \mathcal{B}_{P}$ , $t = t+1$
    \EndFor 
    \State $\textbf{end for}$
\State $\textbf{end}$
\EndProcedure
\end{algorithmic}
\end{algorithm}

\begin{small}
\begin{table*}\centering
\begin{tabular}{|l|l|l|l|l|l|l|l|}
\hline
Rank                                      & Sparsity             & Measurement & \multicolumn{5}{c|}{Methods}                                                                         \\ \hline
\multicolumn{1}{|c|}{}                    &                      &             & \multicolumn{1}{c|}{Sparse OLS} & STORE        & ENV          & HOLRR        & DST2R                 \\ \hline
\multirow{6}{*}{2}                        & \multirow{3}{*}{0.2} & Error         & 56.58 (2.19)                  & 3.53(0.12) & 6.24(0.15) & 4.30(0.14) & \textbf{1.10(0.10)} \\ \cline{3-8} 
                                          &                      & TPR         & 0.95                           & 1.00            & 1.00            & 1.00            & 1.00                    \\ \cline{3-8} 
                                          &                      & FPR         & 0.01                           & 0.00           & 0.00           & 1.00            & 0                     \\ \cline{2-8} 
                                          & \multirow{3}{*}{0.5} & Error         & 60.78(2.21)                   & 3.94(0.14) & 6.96(0.12) & 4.75(0.14) & \textbf{1.16(0.10)} \\ \cline{3-8} 
                                          &                      & TPR         & {0.99(0)}                  & 1.00              & 1.00             & 1.00              & 1.00                      \\ \cline{3-8} 
                                          &                      & FPR         & 0.13(0.00)                    & 0.00            & 0.00           & 1.00           & 0.00                   \\ \hline
\multicolumn{1}{|c|}{\multirow{6}{*}{10}} & \multirow{3}{*}{0.2} & Error         & 62.34(1.98)                   & 3.34(0.14) & 7.97(0.13) & 5.65(0.10) & \textbf{1.23(0.09)} \\ \cline{3-8} 
\multicolumn{1}{|c|}{}                    &                      & TPR         & 0.910                           & 1.00            & {1.00  }      & 1.00              & 1.00                      \\ \cline{3-8} 
\multicolumn{1}{|c|}{}                    &                      & FPR         & 0.005                           & 0.00              & 0.00            & 1.00              & 0.00                     \\ \cline{2-8} 
\multicolumn{1}{|c|}{}                    & \multirow{3}{*}{0.5} & Error         & 65.89(2.04)                   & 3.45(0.12) & 7.15(0.13) & 5.61(0.10) & \textbf{1.28(0.10)} \\ \cline{3-8} 
\multicolumn{1}{|c|}{}                    &                      & TPR         & 0.930                           & 1.00             & 1.00             & 1.00             & {1.00  }               \\ \cline{3-8} 
\multicolumn{1}{|c|}{}                    &                      & FPR         & 0.09                            & 0.00              & 1.00             & 1.00             & 0.00                     \\ \hline
\end{tabular}
\caption{\label{tab1}The results of 3D tensor to 3D tensors regression. Reported are the average estimation error, TPR, FPR for 30 repetition. Standard errors are shown in the parenthesis.  DST2R out performs all other methods across error, TPR and FPR. }
\end{table*}
\end{small}

\subsection{Generation Part}
Similarly, we define the generation part optimization. Specifically, the part of $ z\otimes \mathcal{B}_Q$  is called generation part,  where $z$ is denoted for the result of $\langle \mathcal{X}, \mathcal{B}_P \rangle$.  The optimization of the generation part, that is searching for the optimal $\mathcal{B}_{Q}^{r}$ for a  rank $r$, with $\mathcal{B}_{P}^{r}$  fixed and $\mathcal{B}_{Q}^{r}$ is CP decomposable. So the loss function can be written as following (\ref{g1})
\begin{equation}\label{g1}
 \begin{aligned}
      \argmin_{\mathcal{B}_{Q}^{r}} &\frac{1}{M} \sum_{m=1}^{M}\parallel \mathcal{Y}_{m} - \langle \mathcal{X}_{m} , \mathcal{B}_P^{r}\rangle \otimes \mathcal{B}_{Q}^{r} \parallel_F^2 \\
      &+ \alpha \parallel  \mathcal{B}_{Q}^{r}  \parallel_{F}^2 + \lambda \parallel  \mathcal{B}_{Q}^{r} \parallel_1 \\
        & s.t.  \;  \mathcal{B}_{Q}^{r} =  w_{q}^{r} \circ \pmb{\beta}_{p+1}^{r}\circ \hdots \circ  \pmb{\beta}_{p+q}^{r}
     \end{aligned}
\end{equation}
The same as contraction part, we reformulate (\ref{g1}) to (\ref{g2}) 
\begin{equation}\label{g2}
 \begin{aligned}
      \argmin_{\pmb{\beta}_{k}^{r}} &\frac{1}{M} \sum_{m=1}^{M}\parallel \mathcal{Y}_{m} - (\langle \mathcal{X}_{m} , \mathcal{B}_P^{r}\rangle ) \circ w_q \circ \pmb{\beta}_{p+1}^{r} \circ \hdots \ \pmb{\beta}_{p+q}^{r}  \parallel_F^2 \\
      &+ \alpha  w_{q}^{r2} \parallel  \pmb{\beta}_{k}^{r}  \parallel_{F}^2 + \lambda w_{q}^{r} \parallel  \pmb{\beta}_{k}^{r} \parallel_1 \\
        & s.t.   w_{q} \geq 0 , \;   \parallel  \pmb{\beta}_{k}^{r} \parallel_{1} = 1, k = p+1, ..., p+q;
     \end{aligned}
\end{equation}

\begin{table*}\centering \label{tab2}
\begin{tabular}{|l|l|l|l|l|l|l|}
\hline
Sparsity             & Measurement & \multicolumn{5}{c|}{Methods}                                                                         \\ \hline
                     &             & \multicolumn{1}{c|}{Sparse OLS} & STORE        & ENV          & HOLRR        & DST2R                  \\ \hline
\multirow{3}{*}{0.2} & Error         & 42.14 (1.34)                  & 2.14(0.11) & 7.63(0.15) & 4.24(0.14) & \textbf{1.33(0.09)} \\ \cline{2-7} 
                     & TPR         & 0.99(0.00)                        & 1.00(0.00)         & 1.00(0.00)         & 1.00(0)         & 1.00(0)                  \\ \cline{2-7} 
                     & FPR         & 0.00                           & 0.00(0.00)         & 0.00(0.00)         & 1.00(0.00)         & 0.00(0.00)                  \\ \hline
\multirow{3}{*}{0.5} & Error         & 44.78(1.26)                   & 3.19(0.11) & 8.27(0.11) & 5.32(0.14) & \textbf{1.24(0.12)} \\ \cline{2-7} 
                     & TPR         & {0.99(0.00)}                  & 1.00(0.00)         & 1.00(0.00)         & 1.00(0.00)         & 1.00(0.00)                  \\ \cline{2-7} 
                     & FPR         & 0.13(0.00)                    & 0.00(0.00)         & 0.00(0.00)         & 1.00(0.00)         & 0.00(0.00)                  \\ \hline
\end{tabular}
\caption{The results of 3D tensor to 2D tensors regression. Reported results are the average estimation error, TPR, FPR for 30 repetition. Standard errors are shown in the parenthesis.  DST2R out performs all other methods across error, TPR and FPR. }
\end{table*}

\begin{figure}[t]\label{step}
\begin{center}
    \includegraphics[scale=0.5]{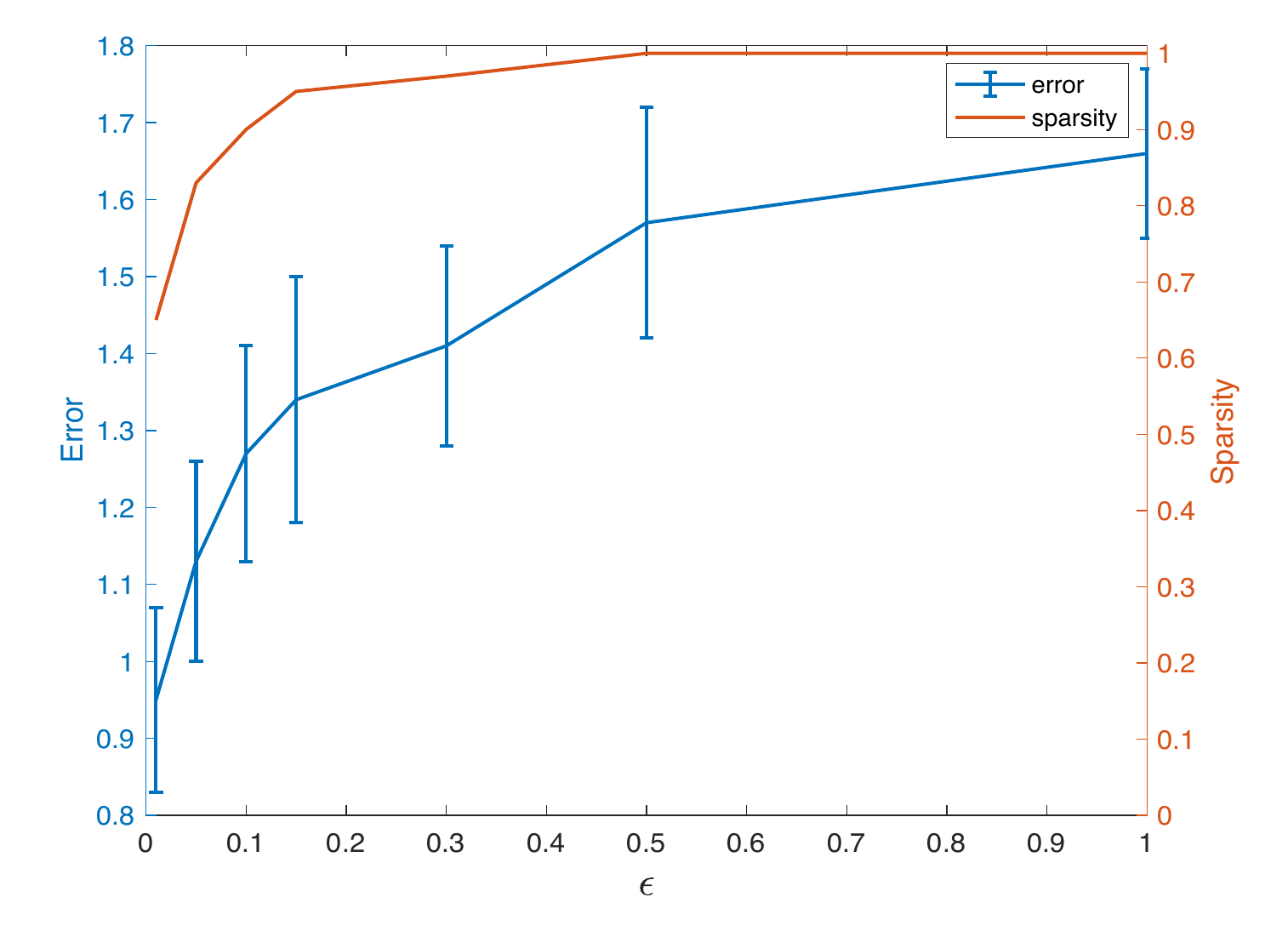}
\caption{This figure shows the $\epsilon$ increases, error and sparsity coverage decreases.}
\end{center}

\end{figure}

The augmented variables are also defined in generation phase. $\hat{\pmb{\beta}_{k}} = w_{q}\pmb{\beta}_{k}$. $\hat{\mathbf{y}}_{m} = [\mathcal{Y}_{m(k)}, 0]$ where $\mathcal{Y}_{m(k)}$ is the  matricization of $\mathcal{Y}_{m}$ in mode k. $\mathbf{Z}^{(-k)}_{m} = \langle \mathcal{X}_{m}, \mathcal{B}_P^{r}\rangle \circ w_q \circ \pmb{\beta}_{p+1}^{r} \circ  \hdots \circ \pmb{\beta}_{k-1}^{r} \circ  \pmb{\beta}_{k+1}^{r} \circ \hdots \circ \pmb{\beta}_{p+q}^{r}$ with dimension $d_{p+1}\times \hdots \times d_{k-1}\times \hdots \times d_{k+1} \times \hdots d_{p+q}$; $\hat{\mathbf{Z}}^{(-k)} = [Vec(\mathbf{Z}^{(-k)}), \sqrt{\alpha \sigma^{-k}}]$,  $\sigma^{-k} = \prod_{l\neq k}^{p+q} \parallel \pmb{\beta}_{k}^{r} \parallel_{2}^{2}$,  $\mathbf{\hat{e}}_{m}= \hat{\mathbf{y}}_{m} - \hat{\pmb{\beta}}_{k}\hat{\mathbf{Z}}^{(-k)}_{m}$. 

The same as contraction part, The backward and forward search is defined as following (\ref{g3}, \ref{g4})
\begin{equation}
\label{g3}
\begin{aligned}
(k^{*}, i_{k}^{*}) = 
& \argmin_{k, i_{k}} \frac{1}{M} \sum_{m=1}^{M}  s^{2} Diag(\hat{\mathbf{Z}}_{m}^{(-k)} \hat{\mathbf{Z}}_{m}^{(-k)T}) \\
& +2s (\hat{\mathbf{Z}}^{(-k)}\mathbf{\hat{e}}_{m}^{T})
\end{aligned}
\end{equation}
\vline
\begin{equation}
\label{g4}
\begin{aligned}
(k^{*}, i_{k}^{*}) = 
& \argmin_{k,i_{k}} \frac{1}{M} \sum_{m=1}^{M}  s^{2} Diag(\hat{\mathbf{Z}}_{m}^{(-k)}\hat{\mathbf{Z}}_{m}^{(-k)T}) \\
& -2s (\hat{\mathbf{Z}}^{(-k)}\mathbf{\hat{e}}_{m}^{T})
\end{aligned}
\end{equation}
 After contraction and  generation phase, $\lambda$ also gets updated. Intuitively, the selection of the index $(k, i_{k})$ is guided by minimizing the regularized $L(\cdot)$ with the current $\lambda_{t}$ and step size. The details of procedure of generation optimization is shown in \textbf{Algorithm 3} 



\begin{algorithm}\label{contract}
\caption{The algorithm \textbf{contract}}
\begin{algorithmic}
\Procedure{\textbf{contract}}{$\mathcal{B} ,\mathbf{\hat{e}}, \epsilon, \gamma, \hat{\mathbf{Z}}, \lambda)$}  
    \State Initialization
    \State \textbf{Backward Search}:
    \State $\begin{aligned}
        (k^{*}, i_{k}^{*}) = 
        & \argmin_{i_{k}} \frac{1}{M} \sum_{m=1}^{M}  s^{2} Diag(\hat{\mathbf{Z}}_{m}^{(-k)}\hat{\mathbf{Z}}_{m}^{(-k)T}) \\
        & +2s (\hat{\mathbf{Z}}^{(-k)}\mathbf{\hat{e}}_{m})\\
      & where \; s = +sign(\pmb{\beta}_{k}^{t}[i_{k}]*\epsilon)
    \end{aligned}$

    \State
    \If {$J(\pmb{\beta}_{k^{*}}^{t} - sI_{i_{k}}) - J(\pmb{\beta}_{k^{*}}^{t}) < -\gamma$}
    \State $\mu_{t+1} = \parallel \beta_{k^{*}}^{t} - sI_{i_{k}} \parallel_{1}$;  $\pmb{\beta}_{k^{*}}^{t+1} = \pmb{\beta}_{k^{*}}^{t} - sI_{i_{k*}}$,
    \Else \; \textbf{Forward Search}:
    \State
    $\begin{aligned}
         (k^{*}, i_{k}^{*}) = 
        & \argmin_{i_{k}} \frac{1}{M} \sum_{m=1}^{M}  s^{2} Diag(\hat{\mathbf{Z}}_{m}^{(-k)}\hat{\mathbf{Z}}_{m}^{(-k)T}) \\
        & -2s (\hat{\mathbf{Z}}_{m}^{(-k)}\mathbf{\hat{e}}_{m})\\
      & where \; s = +sign(\pmb{\beta}_{t, k}[i_{k}]*\epsilon)
    \end{aligned}$
    \State $\mu_{t+1} = \parallel \beta_{k^{*}}^{t} + sI_{i_{k}} \parallel_{1}$;  $\pmb{\beta}_{k^{*}}^{t+1} = \pmb{\beta}_{k^{*}}^{t} + sI_{i_{k}}$,

    \EndIf
    \State \textbf{Return}  $k^{*}, i_{k}^{*}, \mu_{t+1}, \lambda_{t+1}, \mathcal{B}$
\EndProcedure

\end{algorithmic}
\end{algorithm}

\begin{algorithm}
\label{generate}
\caption{The Procedure of \textbf{generate}}
\begin{algorithmic}
\Procedure{\textbf{generate}}{$\mathcal{B} ,e^{t}, \epsilon, \gamma, Z, t, \lambda$}
        \State Initialization
    \State \textbf{Backward Search}:
    \State $\begin{aligned}
        (k^{*}, i_{k}^{*}) = 
        & \argmin_{i_{k}} \frac{1}{M} \sum_{m=1}^{M}  s^{2} Diag(\hat{\mathbf{Z}}_{m}^{(-k)}\hat{\mathbf{Z}}_{m}^{(-k)T}) \\
        & +2s (\hat{\mathbf{Z}}_{m}^{(-k)}\mathbf{\hat{e}}_{m}^{T})\\
       & where \; s = +sign(\pmb{\beta}_{k}^{t}[i_{k}]*\epsilon)
    \end{aligned}$

    \If {$J(\pmb{\beta}_{k^{*}}^{t}- sI_{i_{k^{*}}}) - J(\pmb{\beta}_{k^{*}}^{t}< -\gamma$}
    \State $\mu_{t+1} = \parallel \pmb{\beta}_{k^{*}}^{t} -sI_{i_{k^{*}}} \parallel_{1}$, $\pmb{\beta}_{k^{*}}^{t+1} =\pmb{\beta}_{k^{*}}^{t}[i_{k}] -sI_{i_{k^{*}}}$,

    \Else \; \textbf{Forward Search}:
    \State
    $\begin{aligned}
       (k^{*}, i_{k}^{*}) = 
        & \argmin_{i_{k}} \frac{1}{M} \sum_{m=1}^{M}  s^{2} Diag(\hat{\mathbf{Z}}_{m}^{(-k)}\hat{\mathbf{Z}}_{m}^{(-k)T}) \\
        & -2s (\hat{\mathbf{Z}}_{m}^{(-k)}\mathbf{\hat{e}}_{m}^{T})\\
       & where \; s = +sign(\pmb{\beta}_{k}^{t}[i_{k}]*\epsilon)
    \end{aligned}$
     \EndIf
 \State $\mu_{t+1} = \parallel \beta_{k^{*}}^{t} + sI_{i_{k^{*}}} \parallel_{1}$;  $\pmb{\beta}_{k^{*}}^{t+1} = \pmb{\beta}_{k^{*}}^{t} + sI_{i_{k^{*}}}$,
 \State $\lambda_{t+1} = \min[\lambda_{t}, \frac{L(\mu_{t},  \pmb{\beta}_{k^{*}}^{t}) - L(\mu_{t}, \pmb{\beta}_{k^{*}}^{t}) - \sigma}{\Omega(\mu_{t+1}, \pmb{\beta_{k^{*}}^{t+1}}) - \Omega{(\mu_{t},\pmb{\beta_{k^{*}}^{t}})}}] $
  \State \textbf{Return} $ k^{*}, i_{k}^{*}, \mu_{t+1}, \lambda_{t+1}, \mathcal{B}$   
\EndProcedure
\end{algorithmic}
\end{algorithm}


\section{Theory}
\begin{lemma} (complexity)
the complexity of the algorithm for each iteration, the complexity is $O(M\sum_{k\neq k_p^{*}}^{p}(\prod_{i \neq k, k_p^{*}}^{P+Q}d_{i} + 5\prod_{k=1}^{q}d_{k}) + M\sum_{k\neq k_{q}^{*}}^{Q}(\prod_{i \neq k, k_{q}^{*}}^{P+Q}d_{i} + 5\prod_{k=1}^{Q}d_{k}) + 2Md_{k_p^{*}} + 2Md_{k_q^{*}})  $ where $k_{p}^{*}, k_{q}^{*}$ are the index selected in contraction and generation. The proof is shown in supplementary material.
\end{lemma}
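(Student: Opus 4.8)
The plan is to treat the per-iteration cost as the sum of three pieces: the work inside the contraction call \textbf{contract}, the work inside the generation call \textbf{generate}, and the cost of the forward/backward selection over the mode currently being updated. Each piece is counted as a number of scalar multiply--add operations, and the factor $M$ is carried throughout because every criterion in~(\ref{s1})--(\ref{s2}) and~(\ref{g3})--(\ref{g4}) is an average over the $M$ samples.

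First I would bound the contraction phase. For each candidate mode $k\in\{1,\dots,p\}\setminus\{k_p^{*}\}$, evaluating the selection criterion requires the tensor $\mathbf{Z}^{(-k)}_m=(\mathcal{X}_m\times_1\hat{\pmb{\beta}}_1\cdots\times_{k-1}\hat{\pmb{\beta}}_{k-1}\times_{k+1}\hat{\pmb{\beta}}_{k+1}\cdots\times_p\hat{\pmb{\beta}}_p)\otimes\mathcal{B}^{r}_Q$. Contracting $\mathcal{X}_m$ against the fixed vectors along all modes except $k$ and the already-selected mode $k_p^{*}$ costs, once the partial mode-products are cached and reused, exactly $\prod_{i\neq k,k_p^{*}}^{P+Q}d_i$ operations. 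The remaining quantities appearing in~(\ref{s1})--(\ref{s2}) — the outer product with $\mathcal{B}^{r}_Q$, the residual $\hat{\mathbf{e}}_m$, the diagonal $\mathrm{Diag}(\hat{\mathbf{Z}}^{(-k)}_m\hat{\mathbf{Z}}^{(-k)T}_m)$, and the product $\hat{\mathbf{Z}}^{(-k)}_m\hat{\mathbf{e}}_m$ — each operate on an array whose size is that of the vectorized response, giving a constant number (five) of passes of cost $\prod_{k=1}^{q}d_k$. Summing over the admissible $k$ and over the $M$ samples reproduces the first summand $M\sum_{k\neq k_p^{*}}^{p}\bigl(\prod_{i\neq k,k_p^{*}}^{P+Q}d_i+5\prod_{k=1}^{q}d_k\bigr)$, and the same argument with the generation modes and~(\ref{g3})--(\ref{g4}) in place of the contraction modes gives the second summand.

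Next I would account for the actual selection step. After the reusable terms are in hand, picking the optimal coordinate $i_k^{*}$ along the chosen mode $k_p^{*}$ is an $\argmin$ over its $d_{k_p^{*}}$ entries, which must be evaluated for both the backward and the forward candidate; summed over samples this is $2M d_{k_p^{*}}$, and symmetrically $2M d_{k_q^{*}}$ for the generation mode. Adding the three contributions, and noting that the augmented variables $\hat{\mathbf{y}}_m$ and $\hat{\mathbf{Z}}^{(-k)}_m$ only append $O(d_k)$ rows and therefore do not affect the leading order, yields the stated complexity.

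I expect the main obstacle to be the amortized analysis of the repeated contractions rather than any single calculation: computing $\mathbf{Z}^{(-k)}_m$ independently for each $k$ would make the cost grow like a product over modes, so the bound rests on showing that caching the partial tensor-times-vector products lets the marginal cost per mode collapse to $\prod_{i\neq k,k_p^{*}}^{P+Q}d_i$, and on checking that the output-sized operations recur only a bounded number of times per mode. Verifying precisely which dimensions are excluded in $\prod_{i\neq k,k_p^{*}}^{P+Q}d_i$ versus retained in $\prod_{k=1}^{q}d_k$ is the delicate bookkeeping; everything else is routine operation counting.
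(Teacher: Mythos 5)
Your high-level accounting (contraction work plus generation work plus selection work, each carrying the factor $M$) matches the paper's, but the mechanism that produces the dominant per-mode term $\prod_{i \neq k, k_p^{*}}^{P+Q} d_{i}$ does not, and that is precisely the step you leave open. You propose to form each $\mathbf{Z}^{(-k)}_{m}$ afresh inside the iteration, arguing that ``caching the partial mode-products'' collapses the marginal cost per mode, and you then concede in your final paragraph that the whole bound rests on this unproven amortization. The paper never recomputes $\mathbf{Z}^{(-k)}_{m}$ from scratch: its proof is built on a cross-iteration incremental update,
\[
\mathbf{Z}^{(-k)}_{m, t+1} = \frac{1}{\sigma_{t+1}}\bigl(\sigma_{t}\, \mathbf{Z}^{(-k)}_{m,t} + \mathbf{Z}^{(-k, -k^{*})}_{m,t} \times_{k^{*}} s I_{i_{k^{*}}}\bigr),
\]
which exploits the defining feature of stagewise search: in a single iteration only the single entry $i_{k^{*}}$ of the single coefficient vector $\pmb{\beta}_{k^{*}}$ changes, by $\pm s$. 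Because $sI_{i_{k^{*}}}$ has exactly one nonzero entry, the correction term is a slice selection and rescaling of the cached $\mathbf{Z}^{(-k,-k^{*})}_{m,t}$, and this $1$-sparsity is exactly what deletes the factor $d_{k_p^{*}}$ from the product $\prod_{i \neq k, k_p^{*}}^{P+Q} d_{i}$. A from-scratch computation of $\mathbf{Z}^{(-k)}_{m}$ contracts over every mode other than $k$, \emph{including} $k_p^{*}$, so no amount of prefix/suffix caching of mode products explains why $d_{k_p^{*}}$ should be absent; you noticed this yourself (``verifying precisely which dimensions are excluded \dots is the delicate bookkeeping'') but did not resolve it. As written, the central quantitative claim of the lemma is asserted, not derived.

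Two smaller mismatches with the paper's accounting are worth fixing as well. The constant $5$ in $5\prod_{k=1}^{q}d_{k}$ arises in the paper as $3$ output-sized operations for the $\mathbf{Z}^{(-k)}$ update plus one each for refreshing $\hat{\mathbf{Z}}^{(-k)}_{m}\mathbf{\hat{e}}_{m}$ and $\mathrm{Diag}(\hat{\mathbf{Z}}^{(-k)}_{m}\hat{\mathbf{Z}}^{(-k)T}_{m})$, not five generic passes over the response array. And the terms $2Md_{k_p^{*}}$, $2Md_{k_q^{*}}$ are the cost of the updates attached to the selected mode itself, whereas your reading --- an $\argmin$ over the $d_{k_p^{*}}$ coordinates of the already-chosen mode --- is inconsistent with (\ref{s1})--(\ref{s2}), where the minimization ranges over all candidate modes and coordinates, so that search cost is already folded into the per-mode sums. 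To repair your proof, replace the within-iteration caching argument by the paper's incremental-update argument; your remaining operation counts then go through essentially unchanged.
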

\begin{lemma}(initialization)
The predictors $\mathcal{X}$ are $p+1$ mode tensor with dimension $d_{1} \times \hdots \times d_{p} \times M $ after stacking up M predictors, and $X$ is the matrification of $\mathcal{X}$ with size $M\times d_{1}\hdots d_{p} $. The responses $\mathcal{Y}$ is $q$ mode tensor with dimension $d_{p+1} \times \hdots \times d_{p+q} \times M $. $Y$ is the matrification of $\mathcal{y}$ with size $M\times d_{p+1}\hdots d_{p+q} $.  The initialization value of $\lambda_{0}$ is defined 
\begin{equation*}
    \lambda_{0} = \frac{1}{m} \max\{|X^{T}Y|_{[i, j]}, i = 1, \hdots,  \prod_{s=1}^{p} d_{s},  j = 1 \hdots \prod_{s=p+1}^{p+q} d_{s}\}
\end{equation*}
The $\mathcal{B}_{0}$ is initialized by
\begin{equation*}
    I_{i}, I_{j} = \argmax_{I_{i}, I_{j}} \{|X^{T}Y|_{[i, j]}, i = 1 \hdots \prod_{s=1}^{p} d_{s},  j = 1 \hdots \prod_{s=p+1}^{p+q} d_{s}\}
\end{equation*}
$I_{i}$ and $I_{j}$ are the column and row index of the matrix with the maximum value. $[i_{1}^{*} \hdots i_{p}^{*}], [i^{*}_{p+1}, \hdots i^{*}_{p+q}$ are index in the contraction tensor space and generation tensor space. 
$\mu = \epsilon$, $\pmb{\beta}_{1} = sign(X^{T}Y_{[I_{i}, I_{j}]})I_{i{1}^*}$,  $\pmb{\beta}_{u+1} = sign(X^{T}Y_{[I_{p}, I_{q}]})I_{i^{*}_{p+1}}$, $\pmb{\beta}_{k} = I_{i^{*}_{k}}$ where $k = 2 \hdots p; p+2, \hdots p+q $. The $I_{i^{*}_{k}}$ is the with vector $d_k$ length, and $i^{*}_{k}$th element is $1$,  rest elements $0$. The proof can be found in supplementary material.
\end{lemma}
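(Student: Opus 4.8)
The plan is to treat the initialization as the launch point of the stagewise search from the all-zero coefficient, and to justify both $\lambda_0$ and $\mathcal{B}_0$ through the first-order (subgradient) optimality conditions of the $\ell_1$-regularized objective in (\ref{c4}). First I would observe that at $\mathcal{B}=0$ the residual equals the response, so $\mathbf{\hat{e}}=\mathrm{Vec}(\mathcal{Y})$, and the smooth part $L$ of the objective has gradient proportional (up to the normalization constant $1/m$) to the matricized cross-product $X^{T}Y$. Writing the vectorized problem as an ordinary LASSO, the zero vector is a global minimizer if and only if the KKT condition $0\in\partial J(0)$ holds, i.e.\ every entry of $\frac{1}{m}X^{T}Y$ is dominated in absolute value by $\lambda$. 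The largest $\lambda$ for which this inequality is still satisfied is exactly $\lambda_0=\frac{1}{m}\max_{i,j}|X^{T}Y|_{[i,j]}$, which gives the stated value; for $\lambda>\lambda_0$ the all-zero solution is optimal and the search does not move, so $\lambda_0$ is the correct starting penalty.

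Next I would derive $\mathcal{B}_0$ from the same gradient. A forward stagewise step from the origin perturbs a single coordinate by $\pm\epsilon$, and the first-order change in $L$ along coordinate $(i,j)$ is proportional to $-s\,(X^{T}Y)_{[i,j]}$ with $s=\pm\epsilon$; minimizing this over the choice of coordinate and admissible sign selects precisely the entry of maximal magnitude, $(I_i,I_j)=\argmax_{i,j}|X^{T}Y|_{[i,j]}$, with the descent direction forcing $s=\mathrm{sign}\big((X^{T}Y)_{[I_i,I_j]}\big)\,\epsilon$. This is the same coordinate that saturates the $\lambda_0$ inequality above, so activating it is consistent with the threshold just computed, and it mirrors the selection rules in (\ref{c6})--(\ref{c7}) evaluated at the zero initialization.

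The remaining step is to translate this single matrix coordinate into the rank-one factor initialization. Using the matricization index map stated in the Preliminaries, the column index $I_i$ of $X^{T}Y$ unfolds into a contraction multi-index $[i_1^{*},\hdots,i_p^{*}]$ and the row index $I_j$ into a generation multi-index $[i_{p+1}^{*},\hdots,i_{p+q}^{*}]$. By Lemma~\ref{le1} a unit-rank $\mathcal{B}$ is the outer product $\pmb{\beta}_1\circ\hdots\circ\pmb{\beta}_{p+q}$, and a tensor supported on the single entry indexed by $([i_1^{*},\hdots,i_p^{*}],[i_{p+1}^{*},\hdots,i_{p+q}^{*}])$ is exactly the outer product of the indicator vectors $I_{i_k^{*}}$. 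Absorbing the sign into the two leading factors $\pmb{\beta}_1$ and $\pmb{\beta}_{p+1}$ and setting $\mu=\epsilon$ then reproduces the stated initialization, while the normalization $\|\pmb{\beta}_k\|_1=1$ holds automatically for indicator vectors.

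The main obstacle I anticipate is reconciling the standard unconstrained-matrix LASSO optimality argument with the rank-one CP constraint: the KKT computation for $\lambda_0$ naturally lives in the full matrix space, whereas the feasible set here is the variety of rank-one tensors. The key observation making this work is that a single activated coordinate is itself a rank-one object, so the steepest admissible move from the origin stays inside the feasible set and the matrix-space threshold transfers without loss. Verifying this compatibility, together with the index bookkeeping between the matricization formula and the multi-index tensor coordinates, is the technical heart of the argument, whereas the algebra of the first-order expansions is routine.
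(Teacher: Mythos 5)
Your proof is correct and follows essentially the same route as the paper's own argument: matricize the stacked data into $X$ and $Y$, rewrite the problem as a matrix LASSO $\parallel Y - XB \parallel_F^2$, and read off $\lambda_{0}$ and the initial nonzero coordinate from the KKT (subgradient) condition at the zero solution. Your write-up in fact supplies the details the paper's proof leaves implicit --- the explicit first-order coordinate selection, the sign choice, and the unfolding of the selected matrix index into the rank-one indicator-vector factors consistent with the CP constraint --- so it is a more complete version of the same argument.
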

\begin{theorem}
For $t>0$ such that $\lambda_{t+1} < \lambda_{t}$, the pair$(\mu_t, {\pmb{\beta}}_{p},  {\pmb{\beta}}_{q}) $ will converge to $ ({\mu_t^*(\lambda_{t}), {\pmb{\beta}}^{*}_{p}(\lambda_{t}),  {\pmb{\beta}}^{*}_{q}}(\lambda_{t}))$ when $\gamma \rightarrow 0, \epsilon \rightarrow 0$, where $ ({\sigma_t^*(\lambda_{t}),{\pmb{\beta}}^{*}_{p}(\lambda_{t}),  {\pmb{\beta}}^{*}_{q}}(\lambda_{t}))$ denotes the coordinate-wise minimum with subject to $\lambda_{t}$ of problem (\ref{eq:4}). 
\end{theorem}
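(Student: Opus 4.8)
The plan is to fix the penalty level at $\lambda=\lambda_{t}$ --- which is exactly the regime the hypothesis $\lambda_{t+1}<\lambda_{t}$ isolates, since the stagewise path lowers $\lambda$ only once every backward move admissible at level $\lambda_{t}$ has been exhausted --- and to show that the inner forward/backward iteration is steepest--coordinate descent on the objective $J_{\lambda_{t}}=L+\lambda_{t}\mathcal{R}$ of problem (\ref{eq:4}), whose fixed points are precisely its coordinate-wise minima. First I would record the per-step arithmetic already derived: by the expansions (\ref{c6}) and (\ref{c7}), the change in $L$ caused by a $\pm\epsilon$ move in coordinate $i_{k}$ equals $s^{2}\,\mathrm{Diag}(\hat{\mathbf{Z}}^{(-k)}\hat{\mathbf{Z}}^{(-k)T})\mp 2s(\hat{\mathbf{Z}}^{(-k)}\hat{\mathbf{e}})$, so the selection rules (\ref{s1})--(\ref{s2}) and their generation analogues (\ref{g3})--(\ref{g4}) pick, up to $O(\epsilon^{2})$, the coordinate of largest gradient magnitude. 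Combined with the $\ell_{1}$ subgradient carried by $\mathcal{R}$, this is one greedy step of coordinate descent on $J_{\lambda_{t}}$, and the acceptance test $J(\pmb{\beta}-sI_{i_{k}})-J(\pmb{\beta})<-\gamma$ forces a strict decrease of at least $\gamma$ at every accepted backward step.

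The second step is finite termination and monotonicity for fixed $\epsilon,\gamma>0$. Since $J_{\lambda_{t}}\ge 0$ is bounded below and each accepted backward move lowers it by more than $\gamma$, only finitely many backward moves occur; when none remains, the algorithm either takes a forward move --- which by the update $\lambda_{t+1}=\min[\lambda_{t},\dots]$ strictly lowers $\lambda$ and so ends the $\lambda_{t}$ phase --- or halts. Hence the iterate $(\mu_{t},\pmb{\beta}_{p},\pmb{\beta}_{q})$ reached at the end of the $\lambda_{t}$ phase is a $\gamma$-approximate coordinate-wise stationary point: no single-coordinate $\pm\epsilon$ move decreases $J_{\lambda_{t}}$ by more than $\gamma$.

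The third step is the double limit. Dividing the halting inequality by $\epsilon$ and sending $\epsilon\to 0$ converts the finite difference into the one-sided directional derivatives $\partial_{i_{k}}^{\pm}J_{\lambda_{t}}$, and the halting condition becomes $\partial_{i_{k}}^{\pm}J_{\lambda_{t}}\ge -\gamma/\epsilon$ for every $(k,i_{k})$. Choosing the schedule $\gamma=o(\epsilon)$ so that $\gamma/\epsilon\to 0$ yields $\partial_{i_{k}}^{\pm}J_{\lambda_{t}}\ge 0$ in every coordinate, i.e. $0$ lies in the coordinate subdifferential of the smooth-plus-separable-$\ell_{1}$ objective. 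For each fixed block this is the KKT condition of the convex LASSO subproblem (\ref{c4}), so the limit is the unique block minimizer; invoking the classical convergence theory for block coordinate descent with a separable nonsmooth penalty (Tseng), together with Lemma~\ref{le1} and Theorem~\ref{th1}, the coordinate-wise stationary point of the biconvex problem is attained, which is by definition $(\mu_{t}^{*}(\lambda_{t}),\pmb{\beta}^{*}_{p}(\lambda_{t}),\pmb{\beta}^{*}_{q}(\lambda_{t}))$.

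The hard part will be the order of the two limits. Because (\ref{eq:4}) is only biconvex --- the outer product couples $\mathcal{B}_{P}$ and $\mathcal{B}_{Q}$ --- ``coordinate-wise minimum'' is a block-stationary point rather than a global one, and I must verify that the path does not stall at a spurious point: this needs $\gamma=o(\epsilon)$ with $\epsilon\to0$ so that the $\gamma$-threshold never suppresses a genuinely descending direction while still forcing termination. I expect the identifiability normalization $\|\pmb{\beta}_{k}\|_{1}=1$, $w_{p},w_{q}\ge 0$ and the ridge term $\alpha\|\cdot\|_{F}^{2}$ to supply the strong convexity of each subproblem (\ref{c4}) that makes the per-block $\argmin$ unique and continuous in $\lambda$, which is what upgrades mere subsequential accumulation into genuine convergence of the pair to the coordinate-wise minimum.
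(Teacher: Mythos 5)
Your proposal follows essentially the same route as the paper's own proof: the paper's Lemma~2 is your $\gamma$-descent/finite-termination step, its Lemma~3 is your $\gamma$-approximate coordinate-wise stationarity once $\lambda_{t+1}<\lambda_t$, and the concluding limit $\epsilon,\gamma\rightarrow 0$ with per-block convexity is exactly the paper's final argument. Your write-up is in fact more careful than the paper on two points it glosses over --- the need for a joint schedule $\gamma=o(\epsilon)$ so the threshold does not suppress genuine descent directions, and the fact that the objective is only biconvex, so the limit is a block-stationary (coordinate-wise) minimum rather than a global one.
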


\section{Experiments}
In order to investigate the performance on accuracy as well as variables selection of our method.  In this section, a series experiments have been done with different settings. We successfully show our methods outperform other related methods. First we define the measurement of the estimation accuracy. For the true weighting tensor $\mathcal{B}$ and learnt $\hat{\mathcal{B}}$, the estimation error is $\parallel\hat{\mathcal{B}} - \mathcal{B}\parallel_{F}$. We also follow the measurement protocol from \citep{sun2017store} that the true positive rate and false positive rate for each mode of weighting tensor are reported. To be specific, for mode j of $\mathcal{B}$, $\pmb{\beta}_{i, j}^{r}$ is the $i$th element of the $\pmb{\beta}_j$ in rank $r$, the true positive rate $TPR^{j}$ and false positive rate $FPR^{j}$ are defined as following

\begin{equation*}
    \begin{aligned}
   & TPR^{j} = \frac{\sum_{r=1}^{R} \sum_{i} 1(\beta^{r}_{j,i} \neq 0, \hat{\beta^{r}_{j,i}} \neq 0)}{R \sum_i 1(\beta_{j,i}^r)}, \\
   & FPR^{j} = \frac{\sum_{r=1}^{R} \sum_{i} 1(\beta^{r}_{j,i}=0, \hat{\beta^{r}_{j,i}} \neq 0)}{R \sum_i 1(\beta_{j,i}^r=0)}, \\
   & TPR = \sum_{j=1}^{p+q} TPR^{j}, FPR = \sum_{j=1}^{p+q} FPR^{j}
    \end{aligned}
\end{equation*}

TPR and FPR illustrate how well the predictors are selected. The TPR is the bigger the better while FPR is the smaller the better. Since DST2R is the one of the first sparse tensor on tensor regression, we compare our methods with other tensor response regression methods by vectorizing the input $\mathcal{X}$ to a vector $X$. The comparison are among sparse tensor response regression STORE \citep{sun2017store} with a vector input, the envelop based tensor response regression from \citep{li2017parsimonious}, and higher order low rank regression (HOLRR) from \citep{rabusseau2016low}, as well as the sparse ordinary least squares method (Sparse OLS) \citep{peng2010regularized} by vectorizing the $\mathcal{X}$ and $\mathcal{Y}$.

\subsection{3D Tensor Predictor to 3D Tensor Responses}
In order to investigate DST2R performance, we first simulate 3D predictor $\mathcal{X}$ and 3D $\mathcal{Y}$ response. We follow \citep{sun2017store} simulation procedures to have a fair comparison. $x_{i,j,k}^{m}$ is the 3D $m$th input $\mathcal{X}_m$ $[i,j,k]$ entry, generated by taking values 0 or 1 with an equal probability 0.5. Then we manually set up the coefficient tensor $\mathcal{B} = \sum_{r=1}^R \pmb{\beta}_{1}^{r}\circ \pmb{\beta}_{2}^{r} \circ \pmb{\beta}_{3}^{r}\circ \pmb{\beta}_{4}^{r}  \circ \pmb{\beta}_{5}^{r} \circ \pmb{\beta}_{6}^r $, $\mathcal{B}_{P} = \sum_{r}^{R} \pmb{\beta}_{1}^{r}\circ \pmb{\beta}_{2}^{r} \circ \pmb{\beta}_{3}^{r}$ and $\mathcal{B}_{Q} = \sum_{r}^{R} \pmb{\beta}_{4}^{r}\circ \pmb{\beta}_{5}^{r} \circ \pmb{\beta}_{6}^{r}$ where $d_{1}=8, d_{2}=8, d_{3}=8$; $d_{4}=4, d_{5}=4, d_{6}=4$, and rank R is in  [2, 10]. 
The $\pmb{\beta}_{k}^{r}$ is generated from $\mathcal{N}(0, I)$. We set the sparsity level in $[0.2, 0.5]$ which indicates the percentage of zero entries in the coefficient tensor. For each $\pmb{\beta}_{k}^{r}$, the number of zero elements is $c_{k}*s$ where $c_{k}$ is the cardinality of vector $\pmb{\beta}_{k}^{r}$. The zero entries are randomly sampled. For each rank r with each sparsity level,  we simulate 1000 ${\mathcal{X}^{m}, \mathcal{Y}^{m}}$ samples. 5 folds cross validation is to find out best parameters of DST2R. And the experiments are repeated 30 times. The results are reported in table \ref{tab1}. From table \ref{tab1},  we can see our method achieve the best result in terms of errors, TPR and FPR compared with other methods. The main reason is that DST2R takes the structural information into account during the optimization.
\subsection{3D Tensor Predictors to 2D Tensor Responses}
DST2R not only can also solve inputs and outputs in different modes number. Here we simulate the $1000$ samples ${\mathcal{X}, \mathcal{Y}}$ where $\mathcal{X}$ are 3D tensors while $\mathcal{Y}$ are 2D tensors. Similar as previous section, $x_{i,j,k}^{m}$  is generated by taking values 0 or 1 with an equal probability which is the entry at $[i,j,k]$ of 3D $m$th input $\mathcal{X}_m$. Then we manually set up the coefficient tensor $\mathcal{B} = \sum_{r=1}^R \pmb{\beta}_{1}^{r}\circ \pmb{\beta}_{2}^{r} \circ \pmb{\beta}_{3}^{r}\circ \pmb{\beta}_{4}^{r}  \circ \pmb{\beta}_{5}^{r}$, $\mathcal{B}_{P} = \sum_{r}^{R} \pmb{\beta}_{1}^{r}\circ \pmb{\beta}_{2}^{r} \circ \pmb{\beta}_{3}^{r}$ and $\mathcal{B}_{Q} = \sum_{r}^{R} \pmb{\beta}_{4}^{r}\circ \pmb{\beta}_{5}^{r}$ where $d_{1}=8, d_{2}=8, d_{3}=8$; 
$d_{3}=4, d_{4}=4$, and rank R is 5. The sparsity level is in $[0.2, 0.5]$. The results are reported in table \ref{tab2}. Similarly as previous scenario, 3D tensor to 2D tensors regression can achieve the best performance compared with other methods.
\subsection{2D Tensor Predictor to 2D Tensor Responses}
In order to examine the how the step size $\epsilon$ influences the errors and sparsity coverage. The sparsity coverage in means the true positive rate for zeros entries. The results are plotted in Figure 2. The results shows that with finer grid of searching step, the sparsity coverage decreases.  We simulate 2D predictors $\mathcal{X}$ with size $16*16$ and 2D responses $\mathcal{Y}$ with size $4*4$ with sample size 1000. We set the $\epsilon$ in  $[0.01, 0.05, 0.1, 0.15, 0.5, 1]$ and sparsity level $s = 0.5$. From the Figure 2, we can tell that with $\epsilon$ increasing, the averaged error increases. Meanwhile, the sparsity percentage shows with increasing $\epsilon$, the sparsity of $\mathcal{B}$ can be learnt increase. The results shows with finer grid of searching step,  the sparsity coverage decreases. 


\section{Conclusion}
In this paper, we decompose the tensor on tensor regression for a unit rank into a tensor to scalar regression(contraction part) and a tensor decomposition (generation part). Hence the optimization can be formulated in two sub-problems. Furthermore, inspired by previous work, we introduce the stagewise search based algorithm DST2R to solve sparse tensor on tensor regression. The experiments result demonstrates DST2R benefits from adopting the structural information from predictors and responses. We hope our work will be useful to those looking to deploy tensor on tensor regression models. 

\renewcommand\refname{Bibliography}
\bibliographystyle{abbrvnat} 
\bibliography{refer}

\section{Appendix}
\section{Proof of Lemma}
\subsection{Proof of Lemma 6.1}
\textit{In \textbf{Algorithm 1} for each iteration there are contraction and generation parts. In contraction part, two main terms $\hat{\mathbf{Z}}_{m}^{(-k)}\hat{\mathbf{Z}}_{m}^{(-k)T}$, $\hat{\mathbf{Z}}_{m}^{(-k)}\mathbf{\hat{e}}_{m}$, need to be computed. Then $\mathbf{Z}_{m}^{(-k)}$ needs to be updated by following
\begin{equation*}
    \begin{aligned}
        \mathbf{Z}^{(-k)}_{m, t+1} = \frac{1}{\sigma_{t+1}}(\sigma_{t} \mathbf{Z}^{(-k)}_{m,t} + \mathbf{Z}^{(-k, -k^{*})}_{m,t} \times_{k*}sI_{i_{k*}} )
    \end{aligned}
\end{equation*}
where$(-k, -k^{*})$ denotes every mode except $k$ and $k^{*}$.
From the updating rule, the complexity of updating rule of  $\mathbf{Z}^{(-k)}$ is $O(M\sum_{k\neq k_p^{*}}^{P}(\prod_{i \neq k, k_p^{*}}^{P+Q}d_{i} + 3\prod_{k=1}^{q}d_{k})$. Updating $e$ has complexity $O(M(\prod_{k=1}^{Q}d_{k})d_{k^{*}})$, updating $\mathbf{Z}^{(-k)}\mathbf{\hat{e}}_{m}$ has complexity $O(M\prod_{k=1}^{Q}d_{k})$, updating $\hat{\mathbf{Z}}_{m}^{(-k)}\hat{\mathbf{Z}}_{m}^{(-k)T}$ has complexity.
Similarly, the generation part, $\hat{\mathbf{Z}}_{m}^{(-k)} \hat{\mathbf{Z}}_{m}^{(-k)T}$, 
$\hat{\mathbf{Z}}^{(-k)}_{m}\mathbf{\hat{e}}_{m}^{T}$, and updating $\mathbf{Z}^{(-k)}$
\begin{equation*}
    \begin{aligned}
        \mathbf{Z}^{(-k)}_{m, t+1} = \frac{1}{\sigma_{t+1}}(\sigma_{t} \mathbf{Z}^{(-k)}_{m,t} + \mathbf{Z}^{(-k, -k^{*})}_{m,t} \circ sI_{i_{k*}} )
    \end{aligned}
\end{equation*}
Even the updating rule are different, the complexity term are the same. Totally we have $O(M\sum_{k\neq k_p^{*}}^{p}(\prod_{i \neq k, k_p^{*}}^{P+Q}d_{i} + 5\prod_{k=1}^{q}d_{k}) + M\sum_{k\neq k_{q}^{*}}^{Q}(\prod_{i \neq k, k_{q}^{*}}^{P+Q}d_{i} + 5\prod_{k=1}^{Q}d_{k}) + 2Md_{k_p^{*}} + 2Md_{k_q^{*}})  $ where $k_{p}^{*}, k_{q}^{*}$ are the optimal mode index in contraction and generation part.
}

\subsection{Proof of Lemma 6.2}
\textit{Let's stack $\mathcal{X}_m$ and $\mathcal{Y}_m$ to tensor $\mathcal{X}$ and $\mathcal{Y}$. Then we perform matricization to come up with $\mathbf{X}$ and $\mathbf{Y}$, where $\mathbf{X} \in \mathbb{R}^{m\times d_{1} \hdots d_{p}}$, $\mathbf{Y} \in \mathbb{R}^{m\times d_{p+1} \hdots d_{p+q}}$ So the regression problem can be rewritten into $\parallel \mathbf{Y} - \mathbf{X}\mathbf{B}\parallel_F^{2}$ where $\mathbf{B} \in \mathbb{R}^{d_1 \hdots d_p \times d_{p+1} \hdots d_{p+q}}$. According to KKT condition, we have the initialization of $\lambda_{0} = max|\mathbf{X}^{T}\mathbf{Y}|_{(i, j)}$ where $(i,j) \in d_1 \hdots d_p \times d_{p+1} \hdots d_{p+q}$ and the initial non-zero solutions as well. }

\section{Proof of Theorem}
In order to prove the \textit{theorem 6.3}, first we need to introduce some lemmas and their proofs. 
\subsection{Lemma 1}
Denote $\epsilon = |\epsilon_P| + |\epsilon_Q|$ to simplify the notation, where $\epsilon_P$ is the step size of contraction part, and $\epsilon_Q$ is the step size of the generation part. In the algorithm $\epsilon_P = \epsilon_Q$. \par
\textbf{Lemma 1} If there exist $i_{1}$ and $i_{p+1}$, where such that $J(sI_{i_1}, \hdots I_{i_p}, sI_{i_{p+1}}, \hdots,I_{i_{p+q}}; \lambda) \leq J({0}; \lambda)$
\textit{Proof: From the assumption, we have $L(sI_{i_1}, \hdots I_{i_p}, sI_{i_{p+1}}, \hdots,I_{i_{p+q}}) + \lambda \mathcal{R}(sI_{i_1}, \hdots I_{i_p}, sI_{i_{p+1}}, \hdots,I_{i_{p+q}})  \leq L({0})$. 
So we have following 
\begin{equation*}
    \begin{aligned}
    \lambda & \leq \frac{1}{\epsilon}(J(\{0\})) - J(\{sI_{i_1}, \hdots I_{i_p}, sI_{i_{p+1}}, \hdots,I_{i_{p+q}}\}) \\
    &\leq \frac{1}{\epsilon}(J\{0\}) - \min_{i_1,...i_{p}} J(\{sI_{i_1}, \hdots I_{i_p}, sI_{i_{p+1}}, \hdots,I_{i_{p+q}}\}) \\
   & \leq \frac{1}{\epsilon}(J\{0\}) - \min_{i_1,...i_{p}, ...i_{p+q}} J(\{sI_{i_1}, \hdots I_{i_p}, sI_{i_{p+1}}, \hdots,I_{i_{p+q}}\})\\
   & = \lambda_{0}
    \end{aligned}
\end{equation*}
}
\subsection{Lemma 2} 
\textbf{Lemma 2} For any $\lambda_{t+1} = \lambda_{t}$ we have $J(\lambda_{t+1}, {\pmb{\beta}^{t+1}_{k_p}}, {\pmb{\beta}^{t+1}_{k_q}}) \leq J(\lambda_{t}, {\pmb{\beta}^{t}_{k_p}}, {\pmb{\beta}^{t}_{k_q}}) - \gamma$ \\
\textit{Proof. If backward step is performed in generation stages, we can easily have $J(\lambda_{t+1}, {\pmb{\beta}^{t+1}_{k_p}}, {\pmb{\beta}^{t+1}_{k_q}}) < J(\lambda_{t}, {\pmb{\beta}^{t}_{k_p}}, {\pmb{\beta}^{t}_{k_q}})-\gamma$ and $\lambda_{t+1} = \lambda_{t}$. Let's consider forward stage. If the claim is not true, then we can have
$L(\sigma_{t}, {\pmb{\beta}^{t}_{k_p}}, {\pmb{\beta}^{t}_{k_q}}) - L(\sigma_{t+1}, {\pmb{\beta}^{t+1}_{k_p}}, {\pmb{\beta}^{t+1}_{k_q}}) < \lambda_{t}\mathcal{R}(\sigma_{t+1}, {\pmb{\beta}^{t+1}_{k_p}}, {\pmb{\beta}^{t+1}_{k_q}}) - \lambda_{t}\mathcal{R}(\sigma_{t}, {\pmb{\beta}^{t}_{k_p}}, {\pmb{\beta}^{t}_{k_q}}) + \gamma = \lambda_{t}\epsilon + \gamma$. So it means $\lambda_{t+1} = \lambda_{t} > \frac{1}{\epsilon}(L(\sigma_{t}, {\pmb{\beta}_{t}^{k_p}}, {\pmb{\beta}^{t}_{k_q}}) - L(\sigma_{t+1}, {\pmb{\beta}^{t+1}_{k_p}}, {\pmb{\beta}^{t+1}_{k_q}}) - \gamma)$ so it get contradicts $\lambda_{t+1} = \min(\lambda_{t}, \frac{1}{\epsilon}(L(\sigma_{t}, {\pmb{\beta}^{t}_{k_p}}, {\pmb{\beta}^{t}_{k_q}}) - L(\sigma_{t+1}, {\pmb{\beta}^{t+1}_{k_p}}, {\pmb{\beta}^{t+1}_{k_q}}) - \gamma))$}

\subsection{Lemma 3}
\textbf{Lemma 3} For any $\lambda_{t+1} < \lambda_{t}$ then $J(\pmb{\beta}_{k_p}^{t} + s_{i_{k_p}}I_{k_{p}}, \pmb{\beta}_{k_q}^{t} + s_{i_{k_q}}I_{k_{q}}, \lambda_{t}) > J(\pmb{\beta}_{k_p}^{t} , \pmb{\beta}_{k_q}^{t} , \lambda_{t}) - \gamma$\\
\textit{Proof. First, if $\lambda_{t+1} < \lambda_{t}$, we have $\mathcal{R}(\sigma_{t+1}, \{\pmb{\beta}_{k_q}^{t+1}, \pmb{\beta}_{k_q}^{t+1}\}) = \mathcal{R}(\sigma_{t}, \{\pmb{\beta}_{k_q}^{t}, \pmb{\beta}_{k_q}^{t}\}) + \epsilon$. From $\lambda_{t+1} = \min(\lambda_{t}, \frac{1}{\epsilon}(L(\sigma_{t}, {\pmb{\beta}^{t}_{k_p}}, {\pmb{\beta}^{t}_{k_q}}) - L(\sigma_{t+1}, {\pmb{\beta}^{t+1}_{k_p}}, {\pmb{\beta}^{t+1}_{k_q}}) - \gamma)))$ and $\lambda_{t+1} < \lambda_{t}$, so we have
$L(\sigma_{t}, {\pmb{\beta}^{t}_{k_p}}, {\pmb{\beta}^{t}_{k_q}}) - L(\sigma_{t+1}, {\pmb{\beta}^{t+1}_{k_p}}, {\pmb{\beta}^{t+1}_{k_q}}) - \gamma = \lambda_{t+1}\epsilon = \lambda_{t+1}(\mathcal{R}(\sigma_{t+1}, \{\pmb{\beta}_{k_q}^{t+1}, \pmb{\beta}_{k_q}^{t+1}\}) - \mathcal{R}(\sigma_{t}, \{\pmb{\beta}_{k_q}^{t}, \pmb{\beta}_{k_q}^{t}\}) )$ Then we have 
\begin{equation*}
    \begin{aligned}
    J(\lambda_{t}, \pmb{\beta}_p^{t}, \pmb{\beta}_q^{t}) - \gamma &= J(\lambda_{t+1}, \pmb{\beta}_p^{t}, \pmb{\beta}_q^{t}) - \gamma + (\lambda_t - \lambda_{t+1})\mathcal{R}(\sigma_{t}, \{\pmb{\beta}_{p}^{t+1}, \pmb{\beta}_{q}^{t+1}\})\\
    & = J(\lambda_{t+1}, \pmb{\beta}_p^{t+1}, \pmb{\beta}_q^{t+1}) + (\lambda_t - \lambda_{t+1})\mathcal{R}(\sigma_{t}, \{\pmb{\beta}_{p}^{t}, \pmb{\beta}_{q}^{t}\})\\
    & =  J(\lambda_{t}, \pmb{\beta}_p^{t+1}, \pmb{\beta}_q^{t+1}) + (\lambda_t - \lambda_{t+1})\mathcal{R}(\sigma_{t+1}, \{\pmb{\beta}_{p}^{t+1}, \pmb{\beta}_{q}^{t+1}\}) - \mathcal{R}(\sigma_{t}, \{\pmb{\beta}_{p}^{t}, \pmb{\beta}_{q}^{t}\}) \\
    & =  J(\lambda_{t}, \pmb{\beta}_p^{t+1}, \pmb{\beta}_q^{t+1}) + (\lambda_t - \lambda_{t+1})\epsilon < J(\lambda_{t}, \pmb{\beta}_p^{t+1}, \pmb{\beta}_q^{t+1}) \\
    &= \min\{J(\lambda_{t}, \pmb{\beta}_{k_p}^{t} + s_{i_{k_p}}I_{k_{p}}, \pmb{\beta}_{k_q}^{t} + s_{i_{k_q}}I_{k_{q}})\}
    \end{aligned}
\end{equation*}
}
\subsection{Proof of Theorem 6.3}
\textit{Proof. Given Lemma 2 we have $J(\lambda_{t+1}, {\pmb{\beta}_{t+1}^{k_p}}, {\pmb{\beta}_{t+1}^{k_q}}) \leq J(\lambda_{t}, {\pmb{\beta}_{t}^{k_p}}, {\pmb{\beta}_{t}^{k_q}}) - \gamma$ when $\lambda_{t
+1} = \lambda_{t}$. So with $\tau$ times iteration, we have following $J(\lambda_{t}, {\pmb{\beta}_{t}^{k_p}}, {\pmb{\beta}_{t}^{k_q}}) \leq J(\lambda_{t-1}, {\pmb{\beta}_{t-1}^{k_p}}, {\pmb{\beta}_{t-1}^{k_q}}) - \gamma \hdots \leq  J(\lambda_{t-\tau}, {\pmb{\beta}_{t-\tau}^{k_p}}, {\pmb{\beta}_{t-\tau}^{k_q}}) - \tau\gamma $. Then we have
\begin{equation*}
    J(\lambda_{t}, {\pmb{\beta}_{t}^{k_p}}, {\pmb{\beta}_{t}^{k_q}}) \leq J(\lambda_{t-1}, {\pmb{\beta}_{t-1}^{k_p}}, {\pmb{\beta}_{t-1}^{k_q}}) \hdots \leq  J(\lambda_{t-\tau}, {\pmb{\beta}_{t-\tau}^{k_p}}, {\pmb{\beta}_{t-\tau}^{k_q}})
\end{equation*}
Given \textbf{Lemma 3} we have $\lambda_{t+1} < \lambda_{t}$ if the forwardstage search performed on generation part. $J(\pmb{\beta}_{k_p}^{t} + s_{i_{k_p}}I_{k_{p}}, \pmb{\beta}_{k_q}^{t} + s_{i_{k_q}}I_{k_{q}}, \lambda_{t}) > J(\pmb{\beta}_{k_p}^{t} , \pmb{\beta}_{k_q}^{t} , \lambda_{t}) - \gamma$ This means after searching all backward stage, with $\lambda_{t}$ the loss function can not be reducefd at more. So when $\epsilon and \gamma \rightarrow 0$ when $\lambda_{t} \rightarrow \lambda{t+1}$, and the object function is convex with $(\sigma, \pmb{\beta}_{p}^t,  \pmb{\beta}_{q}^t)$, so the algorithm reach the coordinate-wise minimum
}

\end{document}